\def\blfootnote{\gdef\@thefnmark{}\@footnotetext}
\def\Secref#1{Section~\ref{#1}}
\def\eqref#1{equation~\ref{#1}}
\def\1{\bm{1}}
\DeclareMathAlphabet{\mathsfit}{\encodingdefault}{\sfdefault}{m}{sl}
\SetMathAlphabet{\mathsfit}{bold}{\encodingdefault}{\sfdefault}{bx}{n}
\setlist[itemize]{leftmargin=4.5mm}
\setlist[enumerate]{leftmargin=4.5mm}
\theoremstyle{plain} 
\newtheorem{definition}{Definition}[section]
\newtheorem{theorem}{Theorem}[section]
\newcommand{\method}[1]{\texttt{#1}\xspace}
\newcommand{\longname}{Colored Noise Coherency} 
\newcommand{\name}{\method{CoNoCo}}
\newcommand{\robot}{$\mathcal{R}$\xspace}
\newcommand{\nameoffset}{\method{CoNoCo-Offset-Handling}}
\title{
    Remotely Detectable Robot Policy Watermarking
}
\author{Michael Amir\thanks{Equal contribution in alphabetical order.}, Manon Flageat$^*$ \& Amanda Prorok \\
University of Cambridge \\
\texttt{\{ma2151,mf873,asp45\}@cam.ac.uk} \\
}
\begin{document}

\maketitle
\blfootnote{Project website and code: \url{\websiteurl} }

\begin{abstract}
    
The success of machine learning for real-world robotic systems has created a new form of intellectual property: the trained policy. This raises a critical need for novel methods that verify ownership and detect unauthorized, possibly unsafe misuse. While watermarking is established in other domains, physical policies present a unique challenge: remote detection. Existing methods assume access to the robot's internal state, but auditors are often limited to external observations (e.g., video footage). This ``Physical Observation Gap'' means the watermark must be detected from signals that are noisy, asynchronous, and filtered by unknown system dynamics. We formalize this challenge using the concept of a \textit{glimpse sequence}, and introduce \longname{} (\name{}), the first watermarking strategy designed for remote detection. \name{} embeds a spectral signal into the robot's motions by leveraging the policy's inherent stochasticity. To show it does not degrade performance, we prove \name{} preserves the marginal action distribution.
Our experiments demonstrate strong, robust detection across various remote modalities---including motion capture and side-way/top-down video footage---in both simulated and real-world robot experiments. This work provides a necessary step toward protecting intellectual property in robotics, offering the first method for validating the provenance of physical policies non-invasively, using purely remote observations.

\end{abstract}

\section{Introduction}

The rise of machine learning in robotics has yielded high-performance policies capable of sophisticated locomotion, manipulation, and navigation~\citep{lee2020learning, smith2023demonstrating, hoeller2024anymal}. These policies, often deep neural networks resulting from significant investment, represent a critical new form of intellectual property (IP). As commercial deployment accelerates, the risk of unauthorized misuse and theft escalates, creating an urgent need for reliable methods to verify ownership and provenance.

Digital watermarking is the standard mechanism for IP protection in domains like multimedia~\citep{cox1997secure} or large language models~\citep{kirchenbauer2023watermark, dathathri2024scalable}. However, existing methods for watermarking  policies~\citep{behzadan2019sequential, chen2021temporal} suffer from a critical limitation: they assume \textit{white-box} access to the system, requiring direct inspection of internal states, action logs, or specific trigger environments. In realistic scenarios (Figure~\ref{fig:pipeline}), such access is often impossible or untrustworthy.

The ability to verify policy provenance using only remote observations (e.g., CCTV footage) is essential not only for IP protection but also for high-impact AI safety applications. For instance, remote detection enables \textit{Scalable Safety Compliance}, allowing regulators to non-invasively verify whether safety-critical systems (e.g., autonomous vehicles) are authorized for deployment by checking them against a database of certified policy signatures. It also facilitates \textit{Trustworthy Forensics and Accountability}. Following an incident (e.g., an autonomous vehicle crash or industrial accident), determining the provenance of the deployed control software is critical for liability. Relying solely on onboard logs is insufficient, as they may be unavailable due to damage or deliberately tampered with by adversarial actors seeking to evade responsibility or fraudulently claim damages. Remote watermark detection provides an independent mechanism for auditors to verify the active policy using external, tamper-resistant data sources like traffic camera footage.

These scenarios introduce a fundamental challenge we term the \textit{Physical Observation Gap}. The auditor does not observe the policy's actions (say, torque commands), but rather their consequences (say, movement captured by camera). The watermark signal must cross this gap, surviving severe distortions that destroy traditional watermarking signatures. This entails three primary challenges: (C1) \textit{Synchronization Uncertainty} between the policy's clock and the remote sensor; (C2) \textit{System Dynamics} filtering the actions through the robot's complex, unknown physics (e.g., inertia, friction); and (C3) \textit{Interference and Noise} from the robot's primary behavior and the environment.


In this work, we introduce \textbf{Co}lored \textbf{No}ise \textbf{Co}herency (\name), the first watermarking strategy for robot policy designed to enable remote watermark detection. \name operates in the frequency domain. It embeds the watermark by exploiting the inherent stochasticity of standard continuous control policies. The watermark is then detected using \textit{Spectral Coherency}, a normalized frequency-domain metric conceptually analogous to a correlation coefficient for specific frequencies. It possesses a notable invariance property that cancels out the filtering effects of unknown system dynamics (Theorem~\ref{thm:coherency_LTI}). This allows us to, e.g., inject the watermark on the level of torque commands executed by the robot, but detect it even when we only observe noisy, video footage-derived velocity estimates. By combining this invariance with explicit synchronization techniques, \name achieves robust detection despite the challenges of the Physical Observation Gap.

Our contributions are summarized as follows. \textbf{(i)} We formalize the problem of remotely detectable policy watermarking using the concept of \textit{glimpse sequences} and characterize the fundamental challenges posed by the Physical Observation Gap (C1-C3). \textbf{(ii)} We introduce \name, a novel frequency-domain strategy based on colored noise injection and spectral coherency detection. To our knowledge, it is the first method capable of verifying policy provenance using only remote measurements. \textbf{(iii)} Finally, we demonstrate \name's effectiveness in simulated and real-world robot tasks with challenging detection modalities like motion capture, and top-down, and sideway video footage. We compare it to adapted variants of existing watermarks, and  show its robustness to  different types of noise and interference (including deliberate adversarial noise).

\begin{figure}[t]
  \centering
  \includegraphics[width=\linewidth]{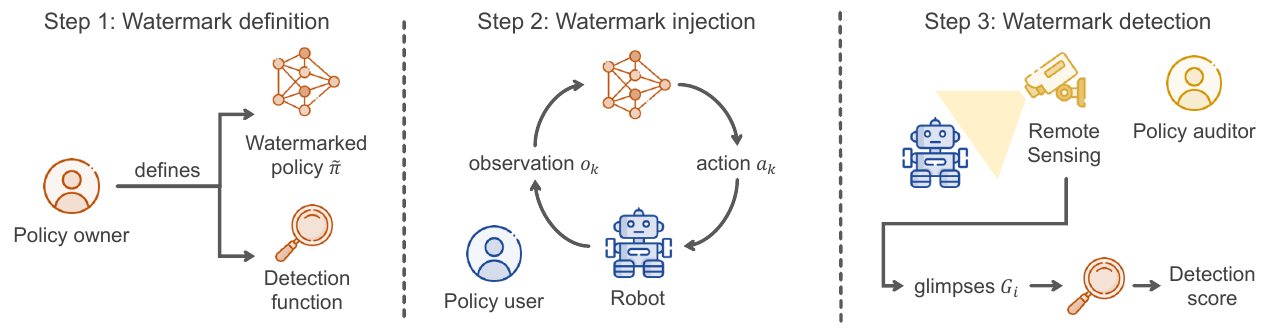}
  \caption{
    Overview of the pipeline for robot policy watermarking. In Step 1, the \textit{policy owner} trains a policy, adds a watermark to it and produces a detection function to identify it. In Step 2, the watermarked policy is used by a \textit{policy user} who deploys it on their own robot. In Step 3, a \textit{policy auditor} aims to identify the policy used on the robot. To do so, they can only access glimpses of the policy behaviour through remote sensing, such as a camera feed; these glimpses are passed through the detection function to identify the policy. 
  }
  \label{fig:pipeline}
\end{figure}

\section{Related Work}
\label{sec:related_work}

\textbf{Digital Watermarking and Signal Processing.}
Watermarking has historically been used for IP protection in multimedia \citep{berghel1996protecting, swanson1998multimedia}, with methods such as Spread Spectrum \citep{cox1997secure} embedding robust, imperceptible signals across wide frequency bands. More recently, watermarking has been applied to generative models \citep{wen2023tree, kirchenbauer2023watermark, dathathri2024scalable}. These methods assume access to well-behaved signals and struggle with the dynamics of remote physical systems; in this work, we extend these frequency-domain principles to such challenging settings.

\textbf{Watermarking in Cyber-Physical Systems (CPS).} 
In CPS security, ``dynamic watermarking" defends robots against sensor attacks by superimposing signals onto control inputs \citep{satchidanandan2016dynamic, ko2016theory}. 
While related, these methods target real-time security (integrity) rather than IP (provenance) and crucially assume auditor access to internal control signals.

\textbf{Watermarking Neural Networks (NN).} 
Methods exist to verify ownership of NN models by embedding signatures into weights \citep{darvish2019deepsigns} or using rare ``trigger" inputs (backdooring) \citep{adi2018turning, szyller2021dawn}. 
These require white-box access or the ability to actively query the model.

\textbf{Watermarking Policies and Agents.} 
Prior work on policy watermarking typically modifies behavior in specific situations.
\citet{behzadan2019sequential} requires execution in a secret ``trigger environment.",  
\citet{chen2021temporal} enforces secret actions in specific ``safe states." 
Methods for agentic systems~\citep{huang2025agent} watermark high-level behavior. 
Unlike \name{}, all of these approaches require access to the internal state of the policy, making them unsuitable for remote detection.

\section{Problem Statement}
\label{sec:problem}

We address the challenge of watermarking a stochastic robotic control policy $\pi_\theta$ so that the watermark can be detected using only remote measurements. The policy maps observations $\mathbf{o}_k \in \mathcal{O}$ to actions $a_k \in \mathcal{A}$ at discrete time steps $k=0, 1, \dots$. We assume a standard structure common in continuous control Reinforcement Learning (RL), where the policy outputs the parameters of a Gaussian distribution\footnote{Often, the action is bounded by a saturation function (e.g., $a_k = \tanh(\dots)$). For simplicity, we omit this in our theoretical analysis; however, our experiments show that detection remains effective even when such a function is applied.}:
$
    a_k = \mu_\theta(\mathbf{o}_k) + \Sigma_\theta(\mathbf{o}_k) \epsilon_k
$.
Here, $\mu_\theta$ is the mean action (the primary behavior), and $\Sigma_\theta(\mathbf{o}_k)$ determines how much the action deviates from the mean, which we call the \textit{exploration scale}. $\epsilon_k \sim \mathcal{N}(0, I)$ is White Gaussian Noise (WGN)---random, uncorrelated noise used for exploration. A robot, \robot, executes this policy.

The objective is to create a watermarked policy $\widetilde\pi_\theta$ using a secret key $\mathcal{K}$. An \textit{auditor}, possessing  $\mathcal{K}$, must detect this signature using only remotely collected, passive data (e.g., video footage), without access to the robot's internal state. This is difficult due to the challenges (C1-C3) and requirements (W1-W2) below.

Firstly, we must overcome the ``Physical Observation Gap'': the separation between the digital policy execution and the remote physical observations. This gap introduces three primary challenges:

\begin{enumerate}[label=C\arabic*., widest=3, align=left, labelsep=0.5em, leftmargin=*]
    \item \textbf{Synchronization Uncertainty.} The policy executes at an internal rate $f_\pi$, which is often unknown (within bounds $[f_{\pi, \mathrm{lb}}, f_{\pi, \mathrm{ub}}]$) and may vary (jitter). Remote sensors (rate $f_g$) sample independently. This causes frequency misalignment. Furthermore, the recording might start at an arbitrary time after the policy execution begins, introducing an unknown time offset.
    
    \item \textbf{System Dynamics.} The auditor does not see the policy commands (e.g., motor torques); they see the physical response (e.g., movement), which is transformed by the robot's unknown and time-varying physical dynamics $\mathcal{S}_{\mathrm{dyn}}$. The physics filter and distort the original signal.
    
    \item \textbf{Interference and Noise.} The policy behaviour $\mu_k$ is typically much stronger than the watermark signal, acting as significant interference. External disturbances and sensor noise further corrupt the observation.
\end{enumerate}

We model the Physical Observation Gap through a \textbf{Glimpse Sequence Formalism}. The policy runs at unknown times $\{T_k\}$. The executed action $a_{\mathrm{exec}}(t)$ drives the robot's state evolution $\dot{s}(t) = \mathcal{S}_{\mathrm{dyn}}(s(t), a_{\mathrm{exec}}(t))$.

\begin{definition}[Glimpse Sequence]
A remote sensor samples the state at times $\{t_i\}$ (rate $f_g$), producing measurements $G_i = \mathcal{G}_{\mathrm{map}}(s(t_i)) + \eta_i$, where $\eta_i$ is measurement noise. The sequence $G=(G_i)_{i=0}^{N-1}$ is the \textbf{glimpse sequence}, the sole data available for detection. $ \mathcal{G}_{\mathrm{map}}$ is a function that maps the state of the system at time $t_i$ to a remote observation, such as a velocity estimate from video data. In complex (MIMO) systems, $G$ is multi-dimensional; we denote the $d$-th dimension as $G_d$.
\end{definition}

Secondly, to be useful, the watermarked policy $\widetilde\pi_{\theta}$ must satisfy two requirements:

\begin{enumerate}[label=W\arabic*., widest=3, align=left, labelsep=0.5em, leftmargin=*]
    \item \textbf{Marginal Distribution Preservation.} To ensure the watermarked policy behaves like the original, we require the probability distribution of actions to remain unchanged: $p_{\pi_{\theta}}(a|\mathbf{o}) = p_{\tilde{\pi}_{\theta}}(a|\mathbf{o})$ for all $\mathbf{o}, a$.
    \item \textbf{Robust Detectability.} The detector $D_{\mathcal{K}}(G)$ must reliably distinguish $\widetilde\pi_{\theta}$ from $\pi_\theta$ despite C1-C3.
\end{enumerate}

\section{The Colored Noise Coherency Strategy (\name{})}
\label{sec:method}

The distortions caused by the Physical Observation Gap (C1-C3) make detection methods based on precise timing (time-domain) fragile. We introduce \longname{} (\name{}), a strategy that analyzes the signal's frequency content (frequency-domain), which is more robust to these distortions. \name{} operates on two principles \textit{(i)} It embeds the watermark by replacing the WGN exploration noise with normalized Colored Gaussian Noise (CGN). CGN is a type of ``shaped" noise that concentrates energy in a target frequency band. 
We show in Section~\ref{sec:W1} that this approach improve detection while satisfying W1.
\textit{(ii)} It detects this signature using Spectral Coherency, a technique robust to unknown dynamics, combined with synchronization methods to address remote sensing challenges.

\paragraph{Watermark Generation and Injection.}
\label{sec:injection} We generate the watermark by creating a CGN sequence, $W_k$, to replace the original WGN $\epsilon_k$. The watermark is defined by a secret key $\mathcal{K} = \{S, \mathcal{B}\}$, where $S$ is a secret seed and $\mathcal{B} = [f_{\textrm{min}}, f_{\textrm{max}}]$ is a frequency band (e.g., in Hz). The process is described in Algorithm~\ref{alg:generation}. 

To generate $W_{k}$, we filter a pseudorandom WGN sequence $X$ (derived from $S$) using a digital Band-Pass filter H. The goal is for the physical actions to vibrate within $\mathcal{B}$. Since the physical frequency depends on the uncertain policy rate $f_{\pi}$ (C1), the digital filter spans $[f_{min}/f_{\pi,ub}, f_{max}/f_{\pi,lb}]$. This guarantees the resulting physical signal covers $\mathcal{B}$, regardless of $f_{\pi}$. $X$ is then filtered and normalized to produce $W_{k}$.

The watermarked policy $\tilde{\pi}_\theta$ utilizes this CGN:
$
    \tilde{a}_k = \mu_\theta(\mathbf{o}_k) + \Sigma_k \cdot W_k
$. The advantage of targetting a specific band $\mathcal{B}$ is that it can be chosen outside the anticipated  frequencies spectrum of $\mu_\theta(\mathbf{o}_k)$, reducing policy interference (C3). Note that the time-varying exploration scale $\Sigma_k$ changes the amplitude of the watermark; we analyze this effect in \Secref{sec:analysis}. If the policy action is multi-dimensional, we generate independent CGN sequences for each dimension to improve detection.

\paragraph{Watermark Detection Strategy.}
\label{sec:strategy} The detection strategy (Algorithm~\ref{alg:detection}) aims to address the Physical Observation Gap. We first address the challenge of an unknown policy frequency (C1). The unknown policy frequency $f_\pi$ must be found. The detector searches over a grid of candidate frequencies $\mathcal{F}_{\textrm{search}} \subseteq [f_{\pi, \textrm{lb}}, f_{\pi, \textrm{ub}}]$. For each candidate $s$, the detector regenerates the watermark $W$ and resamples (time-stretches) it from the hypothesized rate $s$ to the known glimpse rate $f_g$, yielding a hypothesis $W'_s$. This aligns the time scales.
We next address (C2). Detecting the watermark after it passes through unknown system dynamics is challenging. We use Spectral Coherency, a frequency-domain metric analogous to correlation in statistics.


\begin{definition}[Complex Coherency]
The Complex Coherency $C_{XY}(f)$ between two processes X and Y at frequency $f$ is defined as:
$
    C_{XY}(f) = \frac{S_{XY}(f)}{\sqrt{S_{XX}(f)S_{YY}(f)}}
$.
Here, $S_{XX}(f)$ and $S_{YY}(f)$ are the \emph{Power Spectral Densities (PSDs)}, representing the energy (variance) of X and Y at frequency $f$. $S_{XY}(f)$ is the \emph{Cross-Spectral Density (CSD)}, representing the covariance between X and Y at frequency $f$.
\end{definition}

Coherency is a normalized version of the CSD. Its magnitude $|C_{XY}(f)| \in [0, 1]$ acts like a correlation coefficient at frequency $f$, indicating the strength of the linear relationship between $X$ and $Y$. Crucially, if $Y$ is the output of a Linear Time-Invariant (LTI) system $H$ with input X, then $|C_{XY}(f)|=1$, regardless of the specifics of $H$ (Theorem~\ref{thm:coherency_LTI}). This property makes \name robust to system dynamics. For example, if the glimpses are related to the robot's actions by an ODE with constant coefficients (e.g., robot executes torque commands but we observe velocity glimpses), this  transformation is LTI, and does not impact coherency.

For a given hypothesis $s$, we calculate the coherency between $W'_s$ and $G$. Let $C_{W'_d G_d}(f; s)$ be the coherency between the $d$-th dimension of $W'_s$ and the glimpse $G_d$. The final detection score is the maximum average magnitude within the secret band $\mathcal{B}$, calculated using Welch's method \citep{karl2012introductiontosignalprocessing}, which breaks the signal into segments, and optimized over all hypotheses:
$
D(G) = \max_{s \in \mathcal{F}_{\textrm{search}}} \left( \frac{1}{D} \sum_{d=1}^{D} \text{mean}_{f \in \mathcal{B}} |C_{W'_d G_d}(f; s)| \right)
$.

\begin{algorithm}[H]
\caption{Watermark generation and detection procedures.}\label{alg:combined}\label{alg:generation}\label{alg:detection}
\begin{minipage}[t]{0.48\linewidth}
\footnotesize
\begin{algorithmic}[1]
\Statex \textbf{Watermark Generation}
\Require Seed $S$, Length $N$, Dimensions $D$, Band $\mathcal{B}=[f_{min}, f_{max}]$, Freq Bounds $[f_{\pi, \textrm{lb}}, f_{\pi, \textrm{ub}}]$
\State $W \gets \text{Zeros}(N, D)$
\State $B_{low} \gets f_{min} / f_{\pi, \textrm{ub}}$; $B_{high} \gets f_{max} / f_{\pi, \textrm{lb}}$
\State $H_{\mathcal{B}} \gets \text{DesignButterworthBPF}(B_{low}, B_{high})$
\For{$d = 1$ to $D$}
    \State $S_d \gets \text{DeriveDimSeed}(S, d)$
    \State $X \gets \text{GenerateWGN}(S_d, N)$
    \State $W_{\textrm{raw}} \gets \text{ApplyLTIFilter}(H_{\mathcal{B}}, X)$
    \State $W[:, d] \gets W_{\textrm{raw}} / \text{Std}(W_{\textrm{raw}})$ \Comment{Normalize}
\EndFor
\State \Return $W$
\end{algorithmic}

\end{minipage}
\hspace{0.02\linewidth}
\vrule
\hspace{0.02\linewidth}
\begin{minipage}[t]{0.48\linewidth}
\footnotesize
\begin{algorithmic}[1]
\Statex \textbf{Watermark  Detection}
\Require Glimpses $G$, Glimpse Freq $f_g$, Key $\mathcal{K}=\{S, \mathcal{B}\}$, Search Range $\mathcal{F}_{\textrm{search}}$
\State $D \gets \text{NumDimensions}(G)$; $BestScore \gets 0$
\State $W_{base} \gets \text{RegenerateWatermarkSequence}(\dots)$
\For{$s \in \mathcal{F}_{\textrm{search}}$} \Comment{Frequency Alignment}
    \State $W'_s \gets \text{ResamplePoly}(W_{base}, f_g / s)$
    \State $Score_{sum} \gets 0$
    \For{$d = 1$ to $D$}
        \State $W'_{d} \gets W'_s[0 : |G|, d]$
        \State $f, C_{d} \gets \text{Coherency}(G[:,d], W'_{d}, f_g)$
        \State $Score_{d} \gets \text{Mean}(\text{Abs}(C_{d}[f \in \mathcal{B}]))$
        \State $Score_{sum} \gets Score_{sum} + Score_{d}$
    \EndFor
    \If{$Score_{sum} / D > BestScore$}
        \State $BestScore \gets Score_{sum} / D$
    \EndIf
\EndFor
\State \Return $BestScore$
\end{algorithmic}
\end{minipage}
\end{algorithm}

Returning to (C1), we must also consider the possibility of a time offset, where the glimpse sequence starts being recorded some time after the policy has already begun being executed, hence the glimpse data does not collect some initial fraction of the $W_k$ sequence. While Algorithm~\ref{alg:detection}, as described here, assumes the glimpse sequence begins at minimal time offsets from the start of the robot's policy execution, we can robustly address time offsets using the Generalized Cross-Correlation Phase Transform (GCC-PHAT). This is shown in Appendix~\ref{app:offset_sensitivity}.

\section{Analysis of Watermark Properties}
\label{sec:analysis}

We now study the properties of \name. Theorem proofs are provided in the Appendix.

\textbf{(W1) Marginal Distribution Preservation and Utility.}
\label{sec:W1} We show that the watermark injection process preserves the statistical distribution of the exploration noise (Theorem~\ref{thm:W1}), proving requirement (W1).

\begin{theorem}
\label{thm:W1}
Let $W_k$ be generated by filtering a WGN sequence $X_k \sim \mathcal{N}(0, I)$ through a stable LTI filter $H$, followed by normalization to unit variance. Then the marginal distribution of $W_k$ is also $\mathcal{N}(0, I)$.
\end{theorem}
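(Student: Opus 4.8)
The plan is to exploit two elementary facts: a stable LTI filter acts as a (possibly infinite) linear convolution, and any convergent linear combination of jointly Gaussian variables is again Gaussian. Since the input $X_k \sim \mathcal{N}(0,I)$ is white Gaussian noise, the filtered output is therefore Gaussian at every time index, with a mean and variance I can compute directly; normalization then fixes the variance at one. The word \emph{marginal} is central here: I am only claiming that the one-time-step law of $W_k$ is standard normal, not that the sequence is white---indeed, the whole point of \name{} is that $\{W_k\}$ is temporally correlated (colored). Those temporal correlations are irrelevant to the marginal at a fixed index, and it is precisely this marginal that controls the per-step action law needed for (W1).

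First I would treat a single dimension. Writing $h=(h_j)$ for the impulse response of $H$, the pre-normalization output is the convolution $W_{\mathrm{raw},k}=\sum_j h_j X_{k-j}$. Because $H$ is stable, $h$ is absolutely summable and hence square summable, so this sum converges in $L^2$ and defines a genuine Gaussian random variable (an $L^2$-limit of Gaussians is Gaussian). Linearity of expectation gives $\E[W_{\mathrm{raw},k}]=\sum_j h_j\,\E[X_{k-j}]=0$, and independence of the $X_{k-j}$ gives $\Var(W_{\mathrm{raw},k})=\sum_j h_j^2 =: \sigma_H^2<\infty$. Thus $W_{\mathrm{raw},k}\sim\mathcal{N}(0,\sigma_H^2)$ for every $k$. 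Dividing by the constant standard deviation $\sigma_H$ scales a zero-mean Gaussian by a constant, which remains Gaussian with mean $0$ and variance $\sigma_H^2/\sigma_H^2=1$, so $W_k=W_{\mathrm{raw},k}/\sigma_H\sim\mathcal{N}(0,1)$.

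For the multidimensional statement I would invoke the construction in Algorithm~\ref{alg:generation}: each coordinate is filtered from an independently seeded WGN stream, so the $D$ coordinates of $W_k$ are mutually independent, each $\mathcal{N}(0,1)$, which gives $W_k\sim\mathcal{N}(0,I)$ as claimed. The one point that deserves care---and which I expect to be the main subtlety rather than a genuine obstacle---is the meaning of ``normalization to unit variance.'' The idealized argument above divides by the true standard deviation $\sigma_H=(\sum_j h_j^2)^{1/2}$, whereas the implementation divides by the empirical standard deviation of the realized finite sequence. I would close this gap by noting that the filtered process is stationary and ergodic, so the sample variance converges almost surely to $\sigma_H^2$ as $N\to\infty$; hence the empirically normalized $W_k$ is asymptotically $\mathcal{N}(0,I)$, and exactly $\mathcal{N}(0,I)$ under the theoretical normalization to which the theorem statement refers.
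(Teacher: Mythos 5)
Your proposal is correct and follows essentially the same route as the paper's proof: express $W_{\mathrm{raw},k}$ as a convolution, argue Gaussianity from linearity, compute mean $0$ and variance $\sum_j h_j^2$ (finite by stability), and normalize. Your added remarks on $L^2$ convergence, per-dimension independence, and the empirical-versus-true standard deviation are careful refinements the paper glosses over, but they do not change the argument.
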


Theorem~\ref{thm:W1} establishes that $p_{\pi_{\theta}}(a|\mathbf{o}) = p_{\tilde{\pi}_{\theta}}(a|\mathbf{o})$; the statistics of the actions at any single time step are identical for the watermarked and original policies. However, using CGN instead of WGN introduces \textit{temporal autocorrelation}: the noise at one time step is related to the noise at previous steps, rather than being fully independent. We empirically show that this does not affect system performance if the band $\mathcal{B}$ is chosen appropriately. The reason CGN is often benign is noted in \citep{lillicrap2015continuous}: temporally correlated noise can lead to smoother exploration, often improving performance in continuous control. 

\textbf{(W2) Robust Detectability.}
\label{sec:W2} Detectability relies on the properties of Spectral Coherency. We first start with idealized assumptions, assuming constant dynamics (LTI) and constant exploration scale ($\Sigma_k = \Sigma$). The system mapping the watermark $W_k$ to the glimpse $G_i$ is LTI (Linear Time-Invariant). The core reason we use coherency for detection is that coherency can ``see through'' the dynamics; e.g., even if the robot uses torque actions but we observe velocity glimpses, the detection score is not affected by this transformation. This is expressed in the following, well-known theorem \citep{karl2012introductiontosignalprocessing}: 

\begin{theorem}[Invariance of Coherency Magnitude under LTI Filtering]
\label{thm:coherency_LTI}
Let X and Y be stationary processes related by an LTI system $H_{sys}$. In the absence of noise, $|C_{XY}(f)| = 1$, regardless of $H_{sys}$ (provided $H_{sys}(f) \neq 0, S_{XX}(f)>0$).
\end{theorem}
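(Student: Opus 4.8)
The plan is to reduce the claim to two standard input--output spectral identities for LTI systems and then perform a one-line algebraic simplification. Writing $Y(f) = H_{sys}(f) X(f)$ in the frequency domain (the convolution theorem applied to the relation $Y = H_{sys} * X$ defining the LTI coupling), I would first establish the two relationships connecting the spectra of the input and output:
\begin{align}
S_{YY}(f) &= |H_{sys}(f)|^2 \, S_{XX}(f), \\
S_{XY}(f) &= \overline{H_{sys}(f)} \, S_{XX}(f).
\end{align}
Both follow from the definitions of the PSD and CSD as expectations of products of spectral components, $S_{XX}(f) = \mathbb{E}[|X(f)|^2]$ and $S_{XY}(f) = \mathbb{E}[X(f)\overline{Y(f)}]$, together with the stationarity assumption that justifies the Wiener--Khinchin representation. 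Substituting $Y(f) = H_{sys}(f) X(f)$ and pulling the deterministic transfer function out of the expectation yields the two identities directly.

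With these in hand, I would substitute into the definition of the complex coherency and simplify. The denominator becomes $\sqrt{S_{XX}(f) \cdot |H_{sys}(f)|^2 S_{XX}(f)} = |H_{sys}(f)| \, S_{XX}(f)$, so that
\begin{equation}
C_{XY}(f) = \frac{\overline{H_{sys}(f)} \, S_{XX}(f)}{|H_{sys}(f)| \, S_{XX}(f)} = \frac{\overline{H_{sys}(f)}}{|H_{sys}(f)|}.
\end{equation}
The factors of $S_{XX}(f)$ cancel exactly, which is where the hypothesis $S_{XX}(f) > 0$ is used, and the hypothesis $H_{sys}(f) \neq 0$ guarantees $S_{YY}(f) > 0$ so the normalization is well defined. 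Taking magnitudes gives $|C_{XY}(f)| = |\overline{H_{sys}(f)}| / |H_{sys}(f)| = 1$, independent of $H_{sys}$, as claimed. The surviving quantity $\overline{H_{sys}(f)}/|H_{sys}(f)|$ is a pure unit-modulus phase factor encoding the filter's phase response, which is exactly why the \emph{magnitude} of coherency is invariant while its phase is not.

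The main obstacle is not the final algebra but establishing the cross-spectral identity $S_{XY}(f) = \overline{H_{sys}(f)} S_{XX}(f)$ with the correct complex-conjugation convention; the conjugate bookkeeping is the only place where errors can creep in, since an inconsistent convention between the CSD definition and the transfer-function relation would spuriously alter the phase factor, though never its magnitude. I would therefore fix the convention $S_{XY}(f) = \mathbb{E}[X(f)\overline{Y(f)}]$ at the outset and verify that the PSD identity is its special case with $Y = X$. Because the result is classical, an alternative is simply to cite it via the Wiener--Khinchin and convolution theorems; but deriving the two identities explicitly makes the role of the two stated hypotheses transparent.
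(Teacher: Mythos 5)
Your proof is correct and follows essentially the same route as the paper's: it establishes the standard identities $S_{YY}(f) = |H_{sys}(f)|^2 S_{XX}(f)$ and $S_{XY}(f) = \overline{H_{sys}(f)}\, S_{XX}(f)$ and then cancels in the definition of coherency, exactly as the paper does. Your additional care about the conjugation convention and the explicit identification of where the hypotheses $S_{XX}(f)>0$ and $H_{sys}(f)\neq 0$ enter is sound but does not change the argument.
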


In practice, the glimpse $G$ is corrupted by interference from $\mu_k$ and sensor noise $\eta_i$ (C3). We quantify the effect of this interference using the Signal-to-Interference-plus-Noise Ratio (SINR).

\begin{definition}[Signal-to-Interference-plus-Noise Ratio (SINR)]
\label{def:sinr}
The \textbf{SINR} at frequency $f$ is the ratio of the desired signal power to the power of the undesired components: $\text{SINR}(f) = \frac{P_S(f)}{P_N(f)}$. Here, $P_S(f)$ is the power (PSD) of the watermark signal in the glimpse, and $P_N(f)$ is the power of the interference from the policy $\mu_k$, sensor noise, and other sources.
\end{definition}

\begin{theorem}[SINR in Watermarked Policies]
\label{thm:coherency_noise_detailed}
Consider the watermarked policy action $\tilde{a}_k = \mu_k + \Sigma W_k$, with constant $\Sigma$, driving an LTI system $H_{sys}$. Assume $W$, $\mu$, and measurement noise $\eta$ are mutually independent. Then the magnitude squared coherency between $W$ and the glimpse $G$ is $|C_{WG}(f)|^2 = \frac{\text{SINR}(f)}{\text{SINR}(f) + 1}$.
\end{theorem}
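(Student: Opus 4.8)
The plan is to work entirely in the frequency domain and lean on two structural facts: the linearity (superposition) of the LTI system $H_{sys}$, and the mutual independence of $W$, $\mu$, and $\eta$. First I would invoke superposition to express the noise-corrupted glimpse spectrum as a sum of three contributions,
\[
\hat{G}(f) = \underbrace{\Sigma\, H_{sys}(f)\,\hat{W}(f)}_{\text{filtered watermark}} + \underbrace{H_{sys}(f)\,\hat{\mu}(f)}_{\text{filtered policy}} + \underbrace{\hat{\eta}(f)}_{\text{sensor noise}},
\]
so that the glimpse splits cleanly into a watermark-driven component and an interference-plus-noise component.

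Next I would compute the two spectral quantities that enter the coherency. For the cross-spectral density, taking second-order moments of $\hat{W}^*(f)\hat{G}(f)$ and using that $W$ is independent (hence uncorrelated) with both $\mu$ and $\eta$ annihilates the last two terms, leaving $S_{WG}(f) = \Sigma\, H_{sys}(f)\, S_{WW}(f)$. For the glimpse PSD $S_{GG}(f) = \mathbb{E}[|\hat{G}(f)|^2]$, the same independence makes every cross term vanish, so the powers add: $S_{GG}(f) = |\Sigma|^2 |H_{sys}(f)|^2 S_{WW}(f) + |H_{sys}(f)|^2 S_{\mu\mu}(f) + S_{\eta\eta}(f)$.

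Finally I would match these expressions to Definition~\ref{def:sinr}. The first term of $S_{GG}$ is exactly the watermark signal power $P_S(f) = |\Sigma|^2 |H_{sys}(f)|^2 S_{WW}(f)$, and the remaining two terms are the interference-plus-noise power $P_N(f)$, so $S_{GG}(f) = P_S(f) + P_N(f)$. Substituting into $|C_{WG}(f)|^2 = |S_{WG}(f)|^2 / (S_{WW}(f)\, S_{GG}(f))$, one factor of $S_{WW}(f)$ cancels and the numerator collapses to $P_S(f)$, giving $|C_{WG}(f)|^2 = P_S(f)/(P_S(f)+P_N(f))$; dividing numerator and denominator by $P_N(f)$ yields $\text{SINR}(f)/(\text{SINR}(f)+1)$, as claimed. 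As a sanity check, the noiseless limit $P_N \to 0$ recovers $|C_{WG}|=1$, consistent with Theorem~\ref{thm:coherency_LTI}.

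I expect the main obstacle to be not the algebra—which is a routine second-order-moment computation—but the careful bookkeeping that justifies discarding the cross terms and that aligns the resulting power terms with the SINR decomposition. Concretely, I would make explicit that mutual independence forces the cross-spectral densities $S_{W\mu}$ and $S_{W\eta}$ to vanish (this is precisely where the independence hypothesis is load-bearing), and that ``signal'' refers specifically to the watermark-correlated term $\Sigma H_{sys} W$, while everything else—including the policy behavior $\mu$ passed through the \emph{same} dynamics $H_{sys}$—is counted as interference. Once this correspondence to Definition~\ref{def:sinr} is pinned down, the stated identity follows immediately.
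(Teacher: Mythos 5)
Your proposal is correct and follows essentially the same route as the paper's proof: decompose the glimpse via superposition into a watermark-driven signal plus an independent interference-plus-noise component, use independence to annihilate the cross-spectra $S_{W\mu}$ and $S_{W\eta}$ so that $S_{WG}$ reduces to the watermark term and the powers in $S_{GG}$ add, then simplify. The only cosmetic difference is that you compute the numerator explicitly as $|S_{WG}(f)|^2=|\Sigma|^2|H_{sys}(f)|^2 S_{WW}(f)^2=S_{WW}(f)P_S(f)$ via the transfer-function relation, whereas the paper obtains the same identity by invoking Theorem~\ref{thm:coherency_LTI} ($|C_{WS}(f)|=1$) on the noiseless signal path.
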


Theorem~\ref{thm:coherency_noise_detailed} links detectability and SINR. Coherency approaches $1$ when the watermark power $P_S(f)$ is significantly greater than the noise power $P_N(f)$. The exploration scale $\Sigma$ directly controls the strength of $P_S(f)$; thus, policies with more exploration (larger $\Sigma$) allow for better detectability (W2)\footnote{We emphasize that Theorems~\ref{thm:coherency_LTI} and~\ref{thm:coherency_noise_detailed} are well-known properties of LTI systems \citep{karl2012introductiontosignalprocessing}. The contribution of \name is in exploiting these properties to design a remotely detectable watermarking strategy}.

The above analysis holds in idealized (LTI) conditions. Real robotic systems are often LTV (Linear Time-Varying), with changing dynamics $\mathcal{S}_{dyn}(t)$ and scale $\Sigma_{k}$. A time-varying $\Sigma_{k}$ causes ``spectral smearing," reducing SINR. Furthermore, we use Short-Time analysis (Karl, 2012), assuming slow-changing system dynamics. Rapid changes, particularly in phase, can bias the coherency estimate downwards. \name{} mitigates this via its aggregation strategy (Section 3): (i) averaging over multiple glimpse dimensions can pick up signal from dimensions that behave more like an LTI; (ii) the band $\mathcal{B}$ can be chosen to avoid unstable frequencies. Please see Appendix~\ref{app:analysis_non_ideal} for details and practical tuning tips in Appendix~\ref{app:tuningwatermark}.

\vspace{-1mm}
\section{Experimental Setup}
\vspace{-1mm}

To evaluate \name, we design experiments covering multiple glimpse modalities and environments (Figure~\ref{fig:experiment_pipeline}). 
We categorize glimpse modalities based on the auditor's access (Table~\ref{tab:glimpses}). \textbf{Ground Truth Action} assumes direct access to the watermarked action signal. This unrealistic setting serves as an idealized baseline, free from the challenges (C1-C3) central to this work. \textbf{Onboard Sensors} use proprioceptive measurements from the robot. This modality is affected by system dynamics and noise (C2, C3), as it only estimates the effect of the actions after these are filtered by physics. \textit{Remote} modalities use fully external sensors and are the main focus of this work, encompassing all challenges (C1-C3). \textbf{Remote Motion Capture} \textit{remotely} approximates the motion of the robot using multiple cameras; its readings are unsynchronised with the policy (C1). \textbf{Remote Camera Feed} uses a single-pov video recording at either top-down or sideways angle; it is similar to Motion Capture but typically provides less precise estimates (higher C3). Across our onboard sensor, motion capture, and remote camera settings, $f_g / f_{\pi} \approx 5$. Other ratios attained similar performance.


\label{sec:glimpses}

\begin{figure}[t!]
  \centering
  \includegraphics[width=\linewidth]{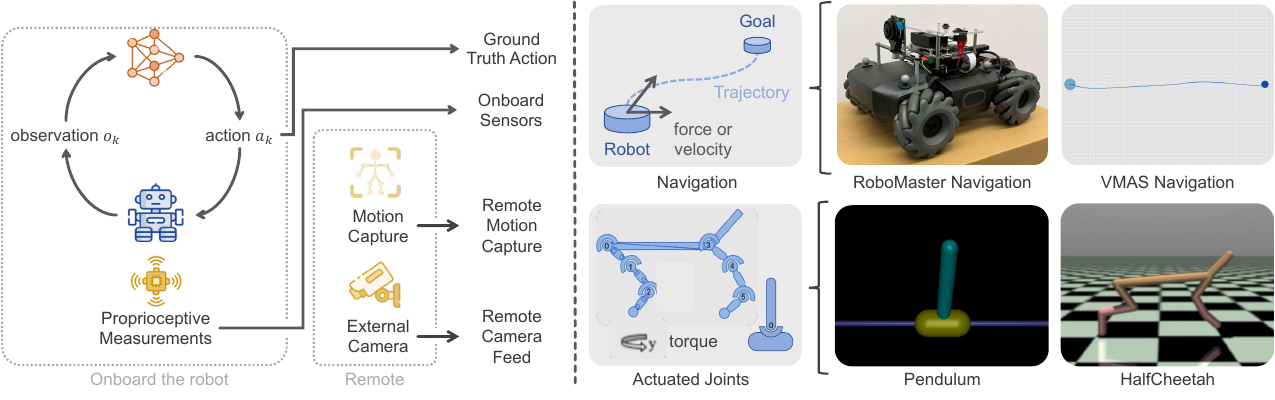}
  \caption{
    Overview of the Experimental Setup. (Left) Glimpse modalities: \textit{Ground Truth Action} uses the watermarked action signal, \textit{Onboard Sensors} uses readings from some onboard sensors; both assume the auditor can access some of the onboard hardware, \textit{Remote Motion Capture} and \textit{Remote Camera Feed} use only external sensors. 
    (Right) Tasks: two are navigation tasks, either velocity- or force-controlled, the other two are actuated joints tasks, including an Inverted Pendulum and a Legged Robot, either force- or torque-controlled. 
  }
  \label{fig:experiment_pipeline}
\end{figure}

    


    






\begin{table*}[ht!]
  \caption{Overview of the glimpse modalities considered in our experiments and the challenges they induce.}
  \label{tab:glimpses}
  \centering
  \footnotesize
  \renewcommand{\arraystretch}{1.0}
  \begin{tabular}{ l | c c c c }
    \toprule
    Challenge & Ground Truth Action & Onboard Sensors & Motion Capture & Camera Feed \\
    \midrule
    C1: Sync. Uncertainty & - & - & \faExclamationTriangle & \faExclamationTriangle \\
    C2: System Dynamics & - & \faExclamationTriangle & \faExclamationTriangle & \faExclamationTriangle \\
    C3: Interference \& Noise & - & \faExclamationTriangle & \faExclamationTriangle & \faExclamationTriangle \\
    \bottomrule
  \end{tabular}
\end{table*}

As \name{} is the first strategy for remote detection, no direct baselines exist. To be able to perform comparisons, we therefore introduce three adapted variants of watermarks intended for related settings: (i) \textbf{Multi-Sine Wave}, inspired by replay attack detection~\citep{ghamarilangroudi2025replaymultisinewave}, embedding secret sinusoids detected via DFT energy; (ii) \textbf{Correlation-Based}, embedding a pseudo-random sequence detected via normalized cross-correlation; and (iii) \textbf{Tournament-Based}, a novel adaptation of SynthID~\citep{dathathri2024scalable} extended for continuous action spaces and remote robustness. The first three strategies handle synchronization uncertainty (C1) by maximizing the detection score over a grid of possible execution frequencies, while Tournament-Based does not require knowledge of the frequency and is thus not impacted by (C1). All approaches preserve the action distribution (W1). Full implementation and tuning details are in Appendix~\ref{app:explain_baselines}.

We evaluate performance using three metrics, each estimated via batch bootstrapping (40 replications for remote glimpse modalities and 100 for others):

\textbf{Detectability.}  As raw detection scores are not comparable across strategies, we assess how reliably watermarks are detected by comparing Receiver Operating Characteristic (ROC) curve. ROC give detection rates based on the relative score obtained by watermarked and non-watermarked policies.

\textbf{Anonymity.} A watermark should only be detectable by the intended owner. Strategies use an \textit{owner key} $k$ for personalization, and detectors with the wrong key should fail. 
In other words, detectability should be high for the intended key $k$ and low for an incorrect key $k'$. Thus, denoting $\text{AUC}(k)$ the ROC Area Under Curve for key $k$, Anonymity is defined as $\text{Anonymity} = 1 - \text{AUC}(k')$, and higher Anonymity is better.

\textbf{Reward Preservation.} A watermark must preserve the original policy's performance. We evaluate this by comparing the reward distributions of watermarked and non-watermarked policies.

In the following, one replication refers to running replications of the watermarked policy and one replication of the non-watermarked policy. For each replication, we reset the policy, the environment, and the watermarking strategy, then generate a new signal of the given modality from scratch. The detection mechanism is applied independently to the outcome of each replication.
To process \textit{Remote Camera Feed} glimpses, we convert all camera feeds into velocity estimates using Template Matching from~\cite{lunevzivc2018discriminative}. When experimenting with remote modalities in simulation, we use the raw rendered images as camera feed and discard all other data.
Note that all the watermarking strategies considered in this work apply at inference time, meaning while the policy is deployed, and do not impact RL training. 
Thus, the different strategies can be deployed on the same pre-trained policies. 
In the following, we pre-train the same policies for all strategies and for each environment using Proximal Policy Optimisation (PPO)~\citep{schulman2017proximal}.

\vspace{-1mm}
\section{Experimental Results}
\vspace{-1mm}

\begin{figure}[t!]
  \centering
  \includegraphics[width=\linewidth]{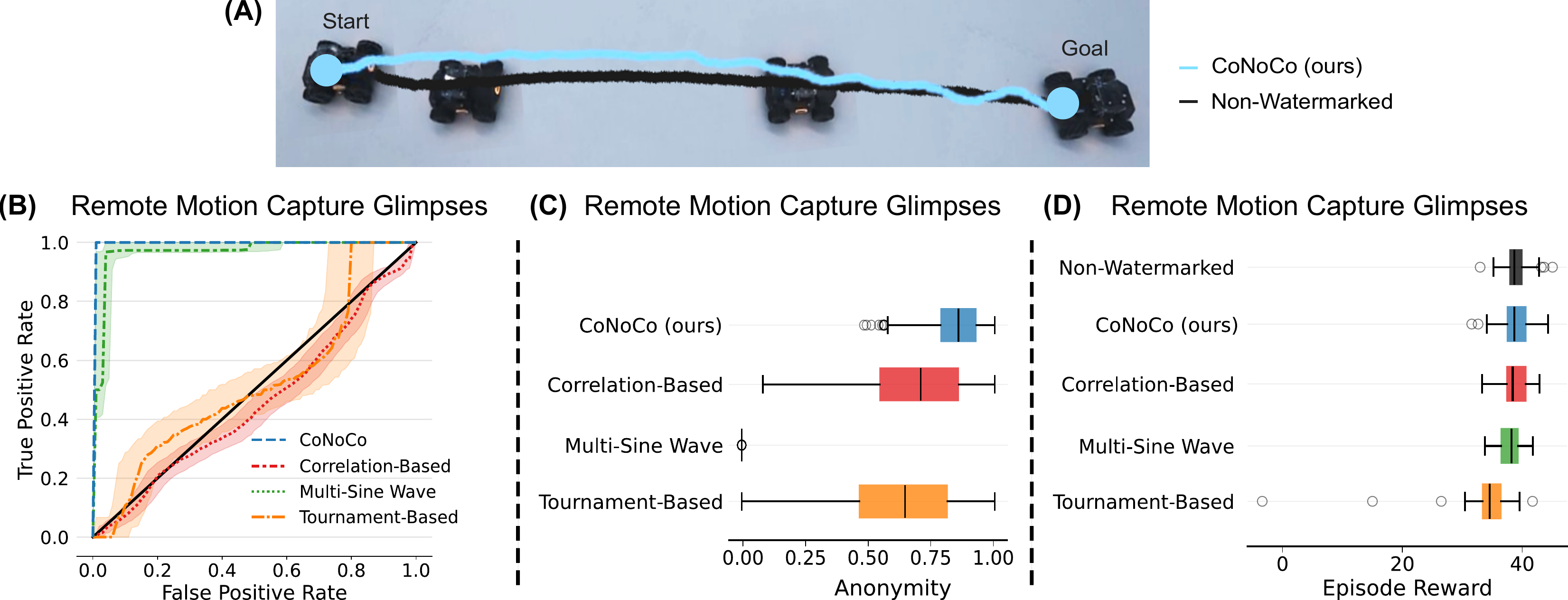}
  \caption{
    Results on the RoboMaster Navigation tasks. (A) Example trajectories of the watermarked and non-watermarked policies on the robot.  (B) Detectability: ROC curve for $40$ replications of the watermarked and non-watermarked policy for each baseline, lines indicate median and dashed areas quartiles. (C) Anonymity: computed as $1 - $ area under the ROC curve, for detection with a different seed. (D) Reward Preservation: reward distribution of the watermarked and non-watermarked policies. 
  }
  \label{fig:robomaster}
\end{figure}

To encode ownership, a watermarking strategy must preserve anonymity and receive a high detection score \textit{only} when the auditor detects it with the secret key used during watermark generation. Our experimental results show that, among all the baselines we consider, only \name has this property. For example, we find that Multi-Sine-Wave has high detectability (as seen by its ROC curves), but really low anonymity, meaning that its watermark can be easily detected \textit{with the wrong secret key}. The other baselines have better anonymity scores, but poor detectability, making them infeasible.
Results for more environments are in Appendix~\ref{app:environments}.

\textbf{RoboMaster Navigation.} We first demonstrate our approach in a real-world setting. We intentionally chose a simple task: navigating to a random 2D goal. This setting is challenging because behavioural redundancy is scarce and deviations are immediately visible, narrowing the margin for imperceptible modification. Success here emphasizes that \name{} is viable even in straightforward tasks, not just complex systems. 
We train a policy in simulation (VMAS~\citep{bettini2022vmas}) and deploy it on the RoboMaster platform~\citep{blumenkamp2024cambridge}, embedding the watermark online. We evaluate the \textit{Remote Motion Capture} modality and use $40$ replications, with each replication collecting $50$s of data ($\approx 1000$ policy calls), and resetting the target upon arrival. 
The results in Fig.~\ref{fig:robomaster} show that \name{} consistently performs among the best in detection across modalities, while preserving anonymity and reward. The results for the same policy on the VMAS task in Appendix~\ref{app:navigation_vel} further show that while \name{} performs well in both simulation and real-world settings, other baselines degrade on the real robot. Crucially, \name{} succeeds despite the remote glimpse setup. The example trajectories (Fig.~\ref{fig:robomaster}.A) confirm that the watermarked policy closely matches the non-watermarked one, showing that detectability does not induce visible behavioural changes. These results highlight the promise of \name{} for real-world robotics and remote detection.

\begin{figure}[t!]
  \centering
  \includegraphics[width=\linewidth]{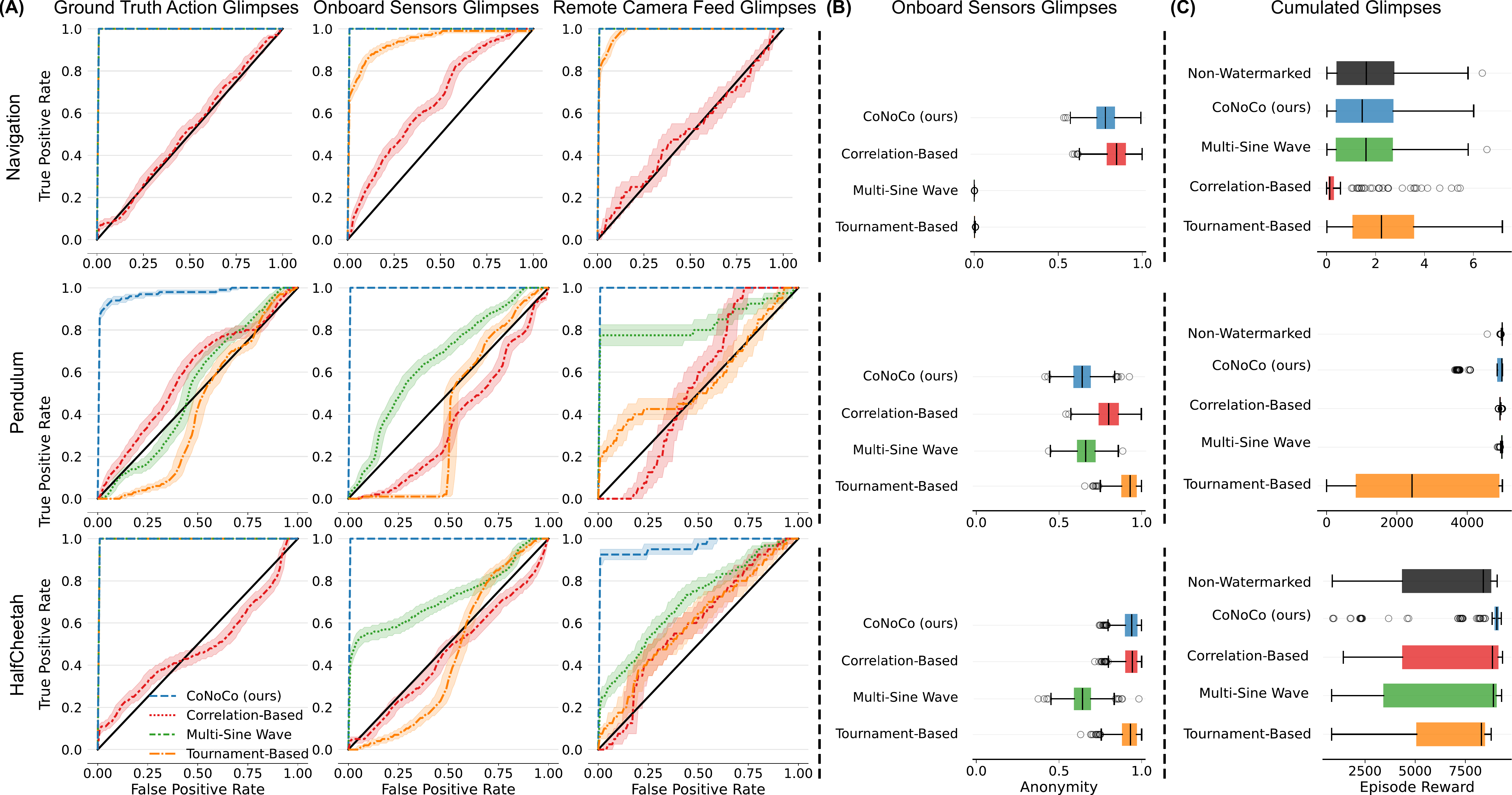}
  \caption{
    Results on a variety of Force and Torque Control tasks with increasing difficulty. (A) Detectability: ROC curve over $100$ replications of the watermarked and non-watermarked policy for each baseline, lines indicate median and dashed areas quartiles. (B) Anonymity: computed as the complement to $1$ of the ROC area under the curve for detection with a different owner seed, for \textit{Onboard Sensors} glimpses. (C) Reward Preservation: reward distribution of the watermarked and non-watermarked policies. 
  }
  \label{fig:simulation}
\end{figure}

\textbf{Force and Torque Controlled Tasks.} We next demonstrate generalization to more complex robotic systems, different control dynamics and \textit{Remote Camera Feed} modality. As opposed to our real-world experiment, which used velocity commands, we watermark force-controlled policies in VMAS Navigation and Mujoco Inverted Pendulum~\citep{todorov2012mujoco}, and torque-controlled policies in Mujoco HalfCheetah~\citep{brockman2016openai}. All tasks use velocity estimates as glimpses, where the estimation methodology depends on the glimpse modality. 
We evaluate \textit{Ground Truth Action}, \textit{Onboard Sensors}, and \textit{Remote Camera Feed} modalities (we omit \textit{Remote Motion Capture} in simulation). For limbed robots such as HalfCheetah, the glimpses are the joint angular velocity, which we remotely estimate from a side-way camera feed by combining linear velocity estimates of the two extremities of each limb. We validate the watermarks across $100$ replications. Data collected per replication: Navigation: $50$s ($\approx 1000$ calls); Pendulum: $25$s ($\approx 1000$ calls); HalfCheetah: $50$s ($\approx 1000$ calls). 
Results are shown in Fig.~\ref{fig:simulation}. Across all tasks and glimpse modalities, \name{} achieves near-perfect detectability, despite the additional complexity of the tasks. This performance is only matched by Multi-Sine Wave, which, however, fails on anonymity. Importantly, \name also obtains high detectability using \textit{Remote Camera Feed}. This result highlights the promise of \name for more complex robotic tasks.

\textbf{Glimpse sequence length sensitivity.} Our experimental results in this section assume fixed glimpse sequence length. However, \name's detection quality correlates with glimpse sequence length, eventually converging on perfect detection (ROC AUC = $1$). To understand how much data is required to reliably detect the watermark in our real robot experiments and each of our simulated environments, we examine \name's performance with respect to different glimpse sequence lengths. Our findings are presented in Appendix~\ref{app:length_sensitivity}.

\textbf{Adversarial Attacks.} Beyond the inherent challenges of remote detection, it is helpful for a watermarking scheme to be robust against deliberate attempts by an adversary to remove the signature. In Appendix~\ref{app:adversarial}, we consider a number of such strategies, including white noise attacks, and jamming strategies specifically targetting the secret band $\mathcal{B}$. We find that \name is highly robust to such attacks: either they degrade policy performance, or they are unable to significantly harm detection. We also discuss (but do not experiment with) the feasibility of distillation as an adversarial attack.

\vspace{-1mm}
\section{Conclusion}
\label{sec:conclusion}
\vspace{-1mm}

Remotely detectable robot policy watermarking is an important capability for IP protection and safety certification in real-world robotics. We formalized the fundamental challenges posed by this type of watermarking, and proposed \longname{} (\name{}), a robust, performance-preserving watermarking strategy designed for remote physical data.  Our experiments, spanning different robot types and remote modalities, demonstrate that \name{} can successfully detect physical watermarks from remote data. Our results demonstrate how robot policy provenance can be verified non-invasively, paving the way for trustworthy deployment and accountability in real-world robot systems. We discuss open questions and limitations in Appendix~\ref{app:limitations}.

\section{Acknowledgements}

This work is supported by European Research Council (ERC) Project 949940 (gAIa), and by the EPSRC funded INFORMED-AI project EP/Y028732/1. We gratefully acknowledge their support.





\bibliography{core/biblio}

@article{berghel1996protecting,
  title={Protecting ownership rights through digital watermarking},
  author={Berghel, Hal and O'Gorman, Lawrence},
  journal={Computer},
  volume={29},
  number={7},
  pages={101--103},
  year={1996},
  publisher={IEEE}
}

@article{swanson1998multimedia,
  title={Multimedia data-embedding and watermarking technologies},
  author={Swanson, Mitchell D and Kobayashi, Mei and Tewfik, Ahmed H},
  journal={Proceedings of the IEEE},
  volume={86},
  number={6},
  pages={1064--1087},
  year={1998},
  publisher={IEEE}
}

@article{cox1997secure,
  title={Secure spread spectrum watermarking for multimedia},
  author={Cox, Ingemar J and Kilian, Joe and Leighton, F Thomson and Shamoon, Talal},
  journal={IEEE transactions on image processing},
  volume={6},
  number={12},
  pages={1673--1687},
  year={1997},
  publisher={IEEE}
}

@inproceedings{kirchenbauer2023watermark,
  title={A watermark for large language models},
  author={Kirchenbauer, John and Geiping, Jonas and Wen, Yuxin and Katz, Jonathan and Miers, Ian and Goldstein, Tom},
  booktitle={International Conference on Machine Learning},
  pages={17061--17084},
  year={2023},
  organization={PMLR}
}

@article{dathathri2024scalable,
  title={Scalable watermarking for identifying large language model outputs},
  author={Dathathri, Sumanth and See, Abigail and Ghaisas, Sumedh and Huang, Po-Sen and McAdam, Rob and Welbl, Johannes and Bachani, Vandana and Kaskasoli, Alex and Stanforth, Robert and Matejovicova, Tatiana and others},
  journal={Nature},
  volume={634},
  number={8035},
  pages={818--823},
  year={2024},
  publisher={Nature Publishing Group UK London}
}

@article{wen2023tree,
  title={Tree-ring watermarks: Fingerprints for diffusion images that are invisible and robust},
  author={Wen, Yuxin and Kirchenbauer, John and Geiping, Jonas and Goldstein, Tom},
  journal={arXiv preprint arXiv:2305.20030},
  year={2023}
}

@article{satchidanandan2016dynamic,
  title={Dynamic watermarking: Active defense of networked cyber--physical systems},
  author={Satchidanandan, Bharadwaj and Kumar, Panganamala R},
  journal={Proceedings of the IEEE},
  volume={105},
  number={2},
  pages={219--240},
  year={2016},
  publisher={IEEE}
}

@inproceedings{ko2016theory,
  title={Theory and implementation of dynamic watermarking for cybersecurity of advanced transportation systems},
  author={Ko, Woo-Hyun and Satchidanandan, Bharadwaj and Kumar, PR},
  booktitle={2016 IEEE Conference on Communications and Network Security (CNS)},
  pages={416--420},
  year={2016},
  organization={IEEE}
}

@inproceedings{adi2018turning,
  title={Turning your weakness into a strength: Watermarking deep neural networks by backdooring},
  author={Adi, Yossi and Baum, Carsten and Cisse, Moustapha and Pinkas, Benny and Keshet, Joseph},
  booktitle={27th USENIX security symposium (USENIX Security 18)},
  pages={1615--1631},
  year={2018}
}

@inproceedings{darvish2019deepsigns,
  title={Deepsigns: An end-to-end watermarking framework for ownership protection of deep neural networks},
  author={Darvish Rouhani, Bita and Chen, Huili and Koushanfar, Farinaz},
  booktitle={Proceedings of the twenty-fourth international conference on architectural support for programming languages and operating systems},
  pages={485--497},
  year={2019}
}

@inproceedings{szyller2021dawn,
  title={Dawn: Dynamic adversarial watermarking of neural networks},
  author={Szyller, Sebastian and Atli, Buse Gul and Marchal, Samuel and Asokan, N},
  booktitle={Proceedings of the 29th ACM international conference on multimedia},
  pages={4417--4425},
  year={2021}
}

@article{behzadan2019sequential,
  title={Sequential triggers for watermarking of deep reinforcement learning policies},
  author={Behzadan, Vahid and Hsu, William},
  journal={arXiv preprint arXiv:1906.01126},
  year={2019}
}

@inproceedings{chen2021temporal,
  title={Temporal watermarks for deep reinforcement learning models},
  author={Chen, Kangjie and Guo, Shangwei and Zhang, Tianwei and Li, Shuxin and Liu, Yang},
  booktitle={Proceedings of the 20th international conference on autonomous agents and multiagent systems},
  pages={314--322},
  year={2021}
}

@article{huang2025agent,
  title={Agent Guide: A Simple Agent Behavioral Watermarking Framework},
  author={Huang, Kaibo and Yang, Zhongliang and Zhou, Linna},
  journal={arXiv preprint arXiv:2504.05871},
  year={2025}
}

@article{schulman2017proximal,
  title={Proximal policy optimization algorithms},
  author={Schulman, John and Wolski, Filip and Dhariwal, Prafulla and Radford, Alec and Klimov, Oleg},
  journal={arXiv preprint arXiv:1707.06347},
  year={2017}
}

@article{lee2020learning,
  title={Learning quadrupedal locomotion over challenging terrain},
  author={Lee, Joonho and Hwangbo, Jemin and Wellhausen, Lorenz and Koltun, Vladlen and Hutter, Marco},
  journal={Science robotics},
  volume={5},
  number={47},
  pages={eabc5986},
  year={2020},
  publisher={American Association for the Advancement of Science}
}

@article{smith2023demonstrating,
  title={Demonstrating a walk in the park: Learning to walk in 20 minutes with model-free reinforcement learning},
  author={Smith, Laura and Kostrikov, Ilya and Levine, Sergey},
  journal={Robotics: Science and Systems (RSS) Demo},
  volume={2},
  number={3},
  pages={4},
  year={2023}
}

@article{hoeller2024anymal,
  title={Anymal parkour: Learning agile navigation for quadrupedal robots},
  author={Hoeller, David and Rudin, Nikita and Sako, Dhionis and Hutter, Marco},
  journal={Science Robotics},
  volume={9},
  number={88},
  pages={eadi7566},
  year={2024},
  publisher={American Association for the Advancement of Science}
}

@inproceedings{bettini2022vmas,
  title={Vmas: A vectorized multi-agent simulator for collective robot learning},
  author={Bettini, Matteo and Kortvelesy, Ryan and Blumenkamp, Jan and Prorok, Amanda},
  booktitle={International Symposium on Distributed Autonomous Robotic Systems},
  pages={42--56},
  year={2022},
  organization={Springer}
}

@article{brockman2016openai,
  title={Openai gym},
  author={Brockman, Greg and Cheung, Vicki and Pettersson, Ludwig and Schneider, Jonas and Schulman, John and Tang, Jie and Zaremba, Wojciech},
  journal={arXiv preprint arXiv:1606.01540},
  year={2016}
}

@inproceedings{todorov2012mujoco,
  title={Mujoco: A physics engine for model-based control},
  author={Todorov, Emanuel and Erez, Tom and Tassa, Yuval},
  booktitle={2012 IEEE/RSJ international conference on intelligent robots and systems},
  pages={5026--5033},
  year={2012},
  organization={IEEE}
}

@article{blumenkamp2024cambridge,
  title={The cambridge robomaster: An agile multi-robot research platform},
  author={Blumenkamp, Jan and Shankar, Ajay and Bettini, Matteo and Bird, Joshua and Prorok, Amanda},
  journal={arXiv preprint arXiv:2405.02198},
  year={2024}
}

@InProceedings{Cho_2019_ICCV_efficacy_of_distillation,
    author    = {Cho, Jang Hyun and Hariharan, Bharath},
    title     = {On the Efficacy of Knowledge Distillation},
    booktitle = {Proceedings of the IEEE/CVF International Conference on Computer Vision (ICCV)},
    month     = {October},
    year      = {2019},
    pages     = {4794--4802},
    doi       = {10.1109/ICCV.2019.00489}
}

@article{lillicrap2015continuous,
  title={Continuous control with deep reinforcement learning},
  author={Lillicrap, Timothy P and Hunt, Jonathan J and Pritzel, Alexander and Heess, Nicolas and Erez, Tom and Tassa, Yuval and Silver, David and Wierstra, Daan},
  journal={arXiv preprint arXiv:1509.02971},
  year={2015}
}

@inproceedings{ghamarilangroudi2025replaymultisinewave,
  title={Replay Attack Detection using Switching Multi-sine Watermarking},
  author={Ghamarilangroudi, Azam and Zad, Shahin Hashtrudi and Zhang, Youmin},
  booktitle={2025 33rd Mediterranean Conference on Control and Automation (MED)},
  pages={381--386},
  year={2025},
  organization={IEEE}
}

@book{karl2012introductiontosignalprocessing,
  title={An introduction to digital signal processing},
  author={Karl, John H},
  year={2012},
  publisher={Elsevier}
}

@article{knapp2003generalized,
  title={The generalized correlation method for estimation of time delay},
  author={Knapp, Charles and Carter, Glifford},
  journal={IEEE transactions on acoustics, speech, and signal processing},
  volume={24},
  number={4},
  pages={320--327},
  year={2003},
  publisher={IEEE}
}

@article{lunevzivc2018discriminative,
  title={Discriminative correlation filter tracner with channel and spatial reliability},
  author={LuNe{\v{z}}i{\v{c}}, A and Voj{\'\i}{\v{r}}, Tom{\'a}{\v{s}} and Zajc, L {\v{C}}ehovin and Matas, Ji{\v{r}}{\'\i} and Kristan, Matej},
  journal={International Journal of Computer Vision},
  volume={126},
  number={7},
  pages={671--688},
  year={2018}
}
\bibliographystyle{iclr2026_conference}

\appendix
\section{Open Questions and Limitations}
\label{app:limitations}

As shown in our theoretical analysis and experiments, the \name watermark is robust to different policy behaviours, glimpse modalities, and robot types (e.g., legged or wheeled). Nevertheless, it is limited by the quality of the glimpse data it receives, which may be even less clean in real-world settings than in our experiments. In Appendix~\ref{app:alterations}, we provide an empirical study of the sensitivity of detection to various forms of glimpse alteration. In all our experiments, we could record the robots’ movements with a stationary side-view or top-view camera that had full visibility at all times. In many real-world scenarios, however, robots may be partially obscured due to other moving bodies, camera angles or self-occlusion. Such setups would make it challenging to extract reliable motion glimpse estimates. Addressing this limitation would require more advanced computer vision techniques, which are beyond the scope of this work and are left for future study.
Addressing these challenges will significantly broaden the applicability of \name in unpredictable, real-world domains.

\name{} is designed around policy stochasticity, which provides the variability needed to embed and recover the watermark. Applying the method to deterministic policies would require adapting these ideas to alternative sources of variability. We view this as a natural and promising extension of our framework and leave it for future work.

\section{Use of LLMs}

We used LLMs (Gemini and ChatGPT) to discover new references related to our work, as well as polish some parts of the writing. We read all suggested references ourselves and verified their relevance.

\section{\name{} in LTV systems}
\label{app:analysis_non_ideal}

Real robotic systems often deviate from the idealized LTI (Linear Time-Invariant) assumptions, presenting as LTV (Linear Time-Varying) systems with changing dynamics $\mathcal{S}_{dyn}(t)$ and exploration scale $\Sigma_k$. This presents two main challenges and corresponding mitigation strategies employed by \name{}.

\subsection{Time-Varying Exploration Scale ($\Sigma_k$)}

\noindent\textbf{The Challenge: Spectral Smearing.} The watermark $W_k$ is scaled by the policy's exploration scale $\Sigma_k$. If $\Sigma_k$ changes rapidly (e.g., the robot suddenly switches from cautious exploration to decisive movement), it modulates the amplitude of the watermark. This modulation (like rapidly changing the volume of a specific tone) spreads the energy of $W_k$ outside the secret band $\mathcal{B}$. This effect, known as "spectral smearing," reduces the detectable signal energy within the band, lowering the SINR and making detection harder.

\noindent\textbf{Mitigation.} \name{} works best when $\Sigma_k$ evolves slowly. We find it robust to this effect empirically, achieving high detection rates despite varying exploration scales in all our experiments. However, if $\Sigma_k$ evolves abruptly, a potential mitigation strategy (which we do not employ in this work) is to replace $\Sigma_k$ with a \textit{moving average} of the last several exploration scales: $\overline{\Sigma}_k$. This smooths the modulation, reducing spectral smearing and improving detection. This introduces a trade-off: larger averaging windows improve detection but may slightly impact policy performance if the responsiveness of the exploration scale is critical.

\subsection{LTV Dynamics and Estimation Challenges}

\noindent\textbf{The Challenge: Phase Variation and Estimation Bias.} Coherency is formally defined for LTI systems, where the relationship (including the timing, or phase) between input and output is constant. \name analyzes the signals of  real-world LTV systems using Short-Time analysis (implemented via Welch's method), which divides the signal into short windows. If these dynamics change over time, particularly the phase relationship between the input (watermark) and the output (glimpse), this averaging can lead to cancellation (destructive interference) of the Cross-Spectral Density (CSD). This cancellation biases the coherency magnitude estimate downwards, making the watermark harder to detect. This is a known limitation when analyzing LTV systems.

\textbf{Mitigation.}  \name{} works best when the system dynamics do not change over time. In particular, if the system is approximately LTI  within each window, the detection signal will be stronger. However, \name also has a number of mitigation elements that improve its robustness when this is not the case:

\begin{enumerate}

    \item[(i)] \textbf{Multi-Dimensional Averaging (Spatial Diversity).} The final detection score (Section~\ref{sec:strategy}) is calculated by averaging the \textit{magnitude} of the  different physical dimensions $D$. Complex robots are typically monitored through multiple sensors or observation dimensions (e.g., different joint angles, velocities, or viewpoints in a camera feed). It is unlikely that all dimensions are affected equally by time-varying dynamics. Some physical dimensions may behave much more linearly (LTI-like) than others. By averaging the detection scores across all available dimensions, \name{} exploits this spatial diversity. Strong detection signals from the well-behaved, more linear dimensions can ensure successful watermarking, even if other dimensions provide a weaker signal due to strong LTV effects. Furthermore, while not explored in this work, one could potentially improve detection further by identifying and censoring dimensions that exhibit highly non-linear behavior.

  \item[(ii)] \textbf{Strategic Band Selection ($\mathcal{B}$).} The design of \name{} allows the owner to choose the secret frequency band $\mathcal{B}$. This band can be strategically selected to target frequencies where the robot's physical response is known to be relatively stable, predictable, and linear (more LTI-like). Conversely, we can avoid frequencies associated with highly unstable or rapidly changing dynamics (e.g., resonant frequencies or behaviors involving abrupt contact changes) where the LTV effects are most pronounced (see Appendix~\ref{app:tuningwatermark}).
\end{enumerate}
\section{Tuning \name}
\label{app:tuningwatermark}

In general, we find that \name is fairly robust and works ``out of the box'' when given sufficiently long glimpse sequences. However, in cases where the glimpse sequences are very short, or in rare cases where we observe that policy performance is impacted by the autocorrelation introduced by the watermark injection, detection and performance can be improved by tuning the frequency band $\mathcal{B}$ and the window length of Welch's method $T_{win}$.

$\mathcal{B}$ should be selected to not interfere with the policy performance and ideally, interact with the system dynamics and policy as little as possible: this might mean omitting low frequencies (if the policy varies smoothly), or high frequencies (if the policy varies highly and requires precision). $T_{win}$ should be selected based on the length of the glimpse sequence. When $f_g/f_\pi = 5$ and we observe $1000$ policy calls--so the glimpse sequence has length $5000$--we find $T_{win} = 64$ to work well. When the glimpse sequence is of length $20000$ or more, $T_{win} = 256$ works best. For intermediate values, $T_{win} = 128$ may work. 

In the case of the VMAS Velocity Navigation environment, we found that the glimpse sequence length of $5000$ (1000 policy steps at a glimpse-to-policy-call ratio of $5$) meant that our default window of $T_{win}=256$ was too large. Setting the window to $T_{win}=64$ attained almost perfect detection results.

In the case of the Pendulum environment, we found that using the band $\mathcal{B}=[1.2,2.49]$ was interfering with the policy performance (but not watermark detection). Reducing the higher frequencies by setting $\mathcal{B}=[0.1,1.5]$ made the policy perform equivalently to its non-watermarked variant while not harming detection.

\section{Baseline Watermarking Strategies}
\label{app:explain_baselines}
\label{app:baselines}

To the best of our knowledge, our proposed watermarking strategy is the first that enables remote detection. 
As no direct baseline exists for this setting, we introduce adapted variants of prior watermarking methods, which, although not originally designed for remote detection, serve as the most relevant points of comparison.

\noindent \textbf{Multi-Sine Wave.} Inspired by techniques used for replay attack detection~\citep{ghamarilangroudi2025replaymultisinewave}, this strategy embeds a watermark by synthesizing a signal composed of a sum of multiple sinusoids. The owner key defines the secret frequencies and signs of these sinusoids within a specific band. This synthesized signal, normalized to preserve the policy's action distribution (W1), replaces the standard exploration noise. Detection is performed in the frequency domain. The detector calculates the energy of the observed glimpses at the secret frequencies using a Discrete Fourier Transform (DFT). The detection score is the signed sum of these energies, maximized over a search grid of possible policy execution frequencies to address synchronization uncertainty (C1).

\noindent \textbf{Correlation-Based.} This strategy embeds a watermark by replacing the policy's exploration noise with a secret pseudo-random sequence. For detection, the glimpses are first high-pass filtered to isolate the watermark signal from the primary behaviour (C3). The detector then calculates the normalized cross-correlation between the filtered glimpses and the hypothesized watermark signal, maximizing this score over a range of possible policy execution frequencies to handle synchronization uncertainty (C1).

\noindent \textbf{Tournament-Based.} SynthID~\citep{dathathri2024scalable} is a tournament-based method for watermarking that represents the state of the art in watermarking LLMs. However, the token generation setting used in SynthID differs from ours in two respects: (1) the support of the token distribution is discrete, and (2) detection is not remote, as the exact LLM output is always available to the detector. We therefore introduce a variant, which we term \textit{Tournament-Based}, designed to: (1) extend to distributions with continuous support, and (2) enable remote detection by (2.i) ensuring robustness to noise through assigning similar scores to neighbouring actions, and (2.ii) relying exclusively on information available in the glimpses when selecting watermarked actions, so it can be inverted for detection.
One round of Tournament-Based proceeds as follows. First, $N$ actions are sampled from the policy distribution for the current timestep, which enables (1). These $N$ actions encounter a tournament, where the winner of each duel is determined using scoring functions known as g-functions. Here, to enforce property (2.i), we use Bell-shaped g-functions, with the parameters of the Bell-curve sampled using a context-dependent random key. To enforce property (2.ii), we choose this random key as the norm of the observation that will later be available through glimpses (e.g. velocities). The owner key is used to build the continuous function that is sampled to get the g-value parameters. 

We tuned all of these watermarks, including \name, by rerunning our experimental suite a few dozen times on the Velocity-command VMAS Navigation (Appendix~\ref{app:navigation_vel}) and a small handful of times on the other environments. We updated the parameters by hand after each attempt, optimizing for detection AUC. In all methods except the tournament-based method, a hyperparameter of interest is the granularity of the search for the hypothesized true frequency $f_{pi}$ (see Algorithm~\ref{alg:detection}). For Multi-Sine Wave, the additional relevant parameters are the band $\mathcal{B}$ (similar to \name), the number of sinusoids. To tune $\mathcal{B}$, we followed the same recommendations as for \name, described in Appendix~\ref{app:tuningwatermark}. For the tournament-based method, the choice of $g$ function and the number of tournament layers was tuned. 

Full implementation and hyperparameter details are available in our code repository.

\section{Glimpse Sequence Length Sensitivity}
\label{app:length_sensitivity}

In Figure~\ref{fig:length_sensitivity}, we analyze how the number of timesteps in the glimpse sequence influences watermark detectability across environments. Detectability is measured using the Area Under the ROC Curve, where a value of 1.0 indicates perfect classification. 
This analysis quantifies the amount of data necessary for reliable detection of the watermark in each environment.
Across most environments, detectability saturates at roughly $1000$ timesteps. These results determine the glimpse length used in our main experiments, ensuring that all evaluations operate in a regime of near-maximal detectability.

\begin{figure}[!h]
    \centering
    \includegraphics[width=\textwidth]{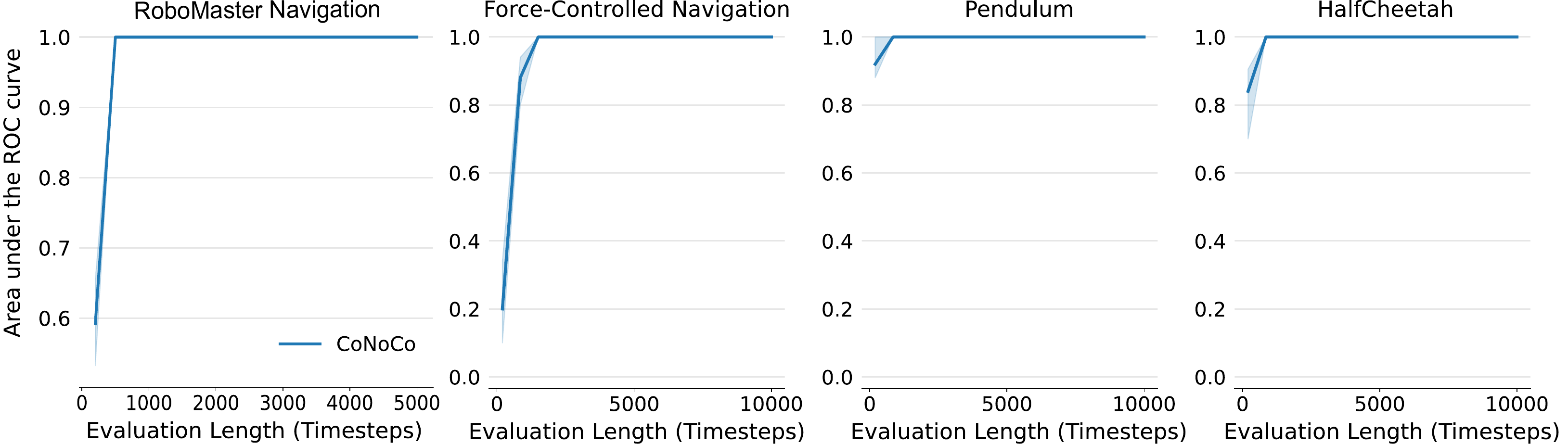}
    \caption{Relationship between \textit{glimpse sequence length} and the watermark detectability of \name{}. Detectability is reported as the ROC AUC averaged over $10$ repetitions. We use the Onboard Sensors glimpse modality, except for “RoboMaster Navigation,” where we instead use real-world data from our robot experiments with Motion Capture glimpses. Shaded regions indicate quartiles.}
    \label{fig:length_sensitivity}
\end{figure}

\section{Glimpse Alteration Sensitivity} \label{app:alterations}

This section studies the sensitivity of \name{} to various glimpse alterations to assess its robustness.

\subsection{Robust Offset Synchronization via GCC-PHAT}
\label{app:offset_sensitivity}

An important challenge in remote auditing (C1) is the unknown time offset: the auditor may begin observing the system (e.g., via CCTV footage) long after the robot started operating. The standard \name{} detection algorithm (Algorithm~\ref{alg:detection}) assumes the glimpse sequence $G$ starts near the beginning of the policy execution (Line 7: $W'_{d} \gets W'_s[0 : |G|, d]$). If there is a large offset $\tau$, the alignment is lost, causing detection to fail.

This can be robustly handled with a simple change to \name{}, by the  Generalized Cross-Correlation Phase Transform (GCC-PHAT)~\citep{knapp2003generalized} to estimate and correct for the time offset $\tau$. GCC-PHAT is a technique for estimating the time delay between two signals. Standard cross-correlation is sensitive to system dynamics (C2), as the robot's physics distort the signal amplitude. In contrast, GCC-PHAT makes the synchronization process invariant to Linear Time-Invariant (LTI) system dynamics, mirroring the invariance property of Spectral Coherency (Theorem~\ref{thm:coherency_LTI}). As GCC-PHAT is well-known, we refer to \citep{knapp2003generalized} for the details. 

\paragraph{Implementation.} For each frequency hypothesis $s$, we calculate the GCC-PHAT curve between the glimpses $G$ and the resampled watermark $W'_s$ within the secret band $\mathcal{B}$. For MIMO systems, we aggregate these curves across dimensions (Lines 6-10) to find a robust consensus offset $\hat{\tau}$ (Line 11). Finally, the coherency score is calculated on the synchronized segments (Lines 12-19).

\begin{algorithm}[ht!]
\caption{Watermark detection with offset handling via GCC-PHAT (\nameoffset{})}\label{alg:detection_gccphat}
\footnotesize
\begin{algorithmic}[1]
\Require Glimpses $G$, Glimpse Freq $f_g$, Key $\mathcal{K}=\{S, \mathcal{B}\}$, Search Range $\mathcal{F}_{\textrm{search}}$, Max Offset $T_{max}$
\State $D \gets \text{NumDimensions}(G)$; $BestScore \gets 0$
\State $W_{base} \gets \text{RegenerateWatermarkSequence}(\dots, \text{Length} \approx |G|/f_g + T_{max})$
\For{$s \in \mathcal{F}_{\textrm{search}}$} 
   \State $W'_s \gets \text{ResamplePoly}(W_{base}, f_g / s)$
   \State $CCC_{agg} \gets 0$ \Comment{Offset handling starts here}
   \For{$d = 1$ to $D$}
        \State $CCC_d \gets \text{CalculateGCCPHAT}(G[:,d], W'_s[:,d], f_g, \mathcal{B})$ 
        \State $CCC_{agg} \gets CCC_{agg} + CCC_d$
   \EndFor
   \State $\hat{\tau} \gets \text{argmax}(CCC_{agg})$  \Comment{Find consensus offset across the D dimensions. Offset handling ends here!}
   
   \State $W'_{\text{aligned}} \gets W'_s[\hat{\tau} : \hat{\tau} + |G|]$ \Comment{Rest of the detection algorithm remains the same.} 
   \State $Score_{sum} \gets 0$
   \For{$d = 1$ to $D$}
        \State $f, C_{d} \gets \text{Coherency}(G[:,d], W'_{\text{aligned}}[:,d], f_g)$
        \State $Score_{d} \gets \text{Mean}(\text{Abs}(C_{d}[f \in \mathcal{B}]))$
        \State $Score_{sum} \gets Score_{sum} + Score_{d}$
   \EndFor
   \If{$Score_{sum} / D > BestScore$}
        \State $BestScore \gets Score_{sum} / D$
   \EndIf
\EndFor
\State \Return $BestScore$
\end{algorithmic}
\end{algorithm}

\paragraph{Empirical Validation.} The results in Figure~\ref{fig:offset_sensitivity} demonstrate the effectiveness of this enhancement. In all experiments, standard \name{}’s performance degrades rapidly as the offset increases, whereas \nameoffset{} maintains near-perfect detection (AUC $\approx 1.0$) even for substantial offsets. This confirms that \name{} can be readily adapted for realistic auditing scenarios involving arbitrary observation start times.

\begin{figure}[!h]
    \centering
    \includegraphics[width=\textwidth]{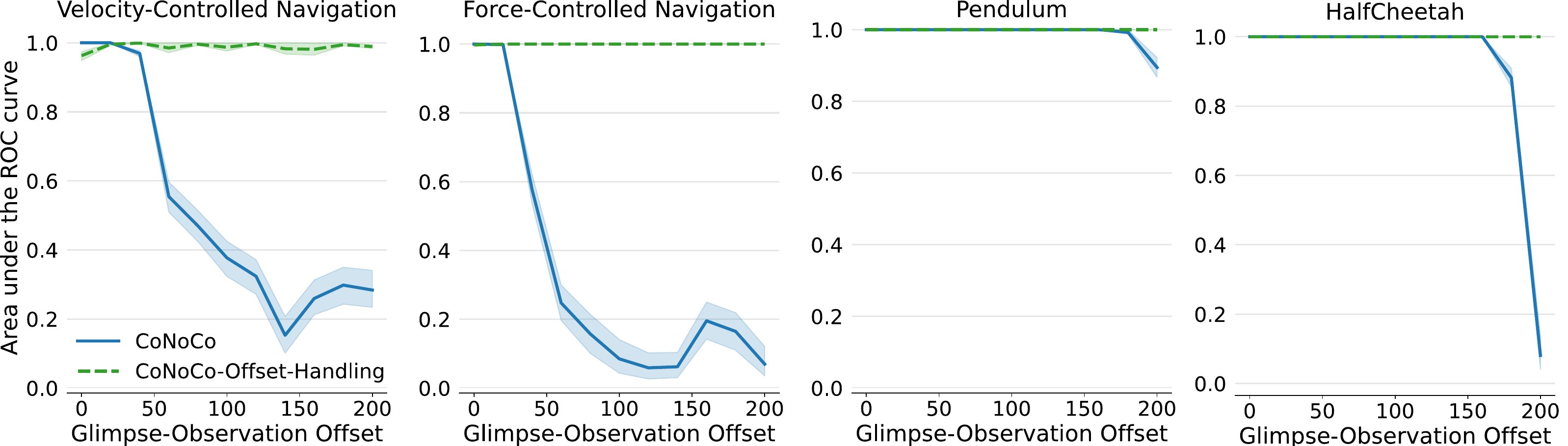}
    \caption{
    Relationship between \textit{glimpse offset} and the detectability of \name{} (Algorithm~\ref{alg:detection}) and \nameoffset{} (Algorithm~\ref{alg:detection_gccphat}). Detectability is reported as the ROC AUC over $100$ repetitions, using the Onboard Sensors glimpse modality. Shaded regions indicate quartiles.
    }
    \label{fig:offset_sensitivity}
\end{figure}

\subsection{Robutness to Time-Jitter}
\label{app:jitter_sensitivity}

Another important challenge in real-world glimpses is time-jitter, which corresponds to inconsistent intervals between glimpses within a sequence. 
Such jitter can arise either at the policy level, where the policy is queried at slightly inconsistent time intervals (due to hardware constraints, network traffic, etc.), or at the glimpse-acquisition level, where the sensor captures glimpses at slightly inconsistent intervals (due to the properties of the specific sensor at hand).
We measure jitter as the relative standard deviation of the glimpse interval. For example, a jitter of $0.2$ for glimpses at frequency $100 Hz$  means that the interval between two glimpses is drawn from a normal distribution with mean $1/100 = 0.01s$ and std $0.2 / 100 = 0.02s$. 

\begin{figure}[!h]
    \centering
    \includegraphics[width=\textwidth]{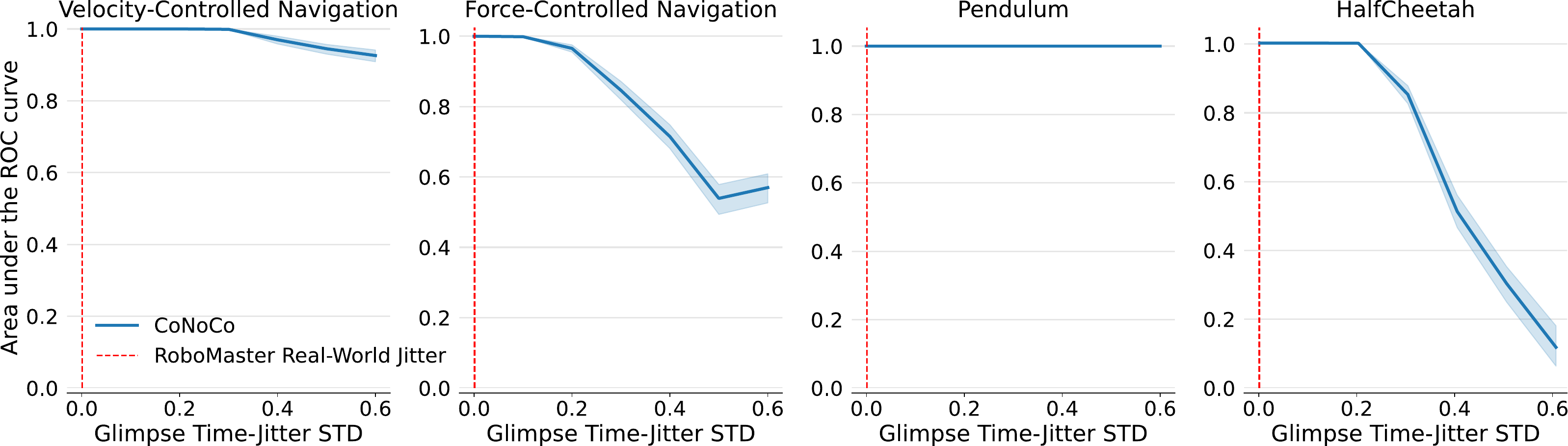}
    \caption{
    Relationship between \textit{glimpse time-jitter} and the detectability of \name{}, reported as the ROC AUC over $100$ repetitions, using the Onboard Sensors glimpse modality. Shaded regions indicate quartiles. The vertical red line denotes the actual time jitter quantified in our experiments on the RoboMaster platform.
    }
    \label{fig:jitter_sensitivity}
\end{figure}

We study the effect of increasing jitter on \name{} in Figure~\ref{fig:jitter_sensitivity}. For comparison, we quantify the time jitter in the Remote Motion Capture Glimpses collected on RoboMaster for our main experimental results and find a value of $0.002$, which we include as a reference in the figure. The results demonstrate that \name{} remains robust under levels of jitter significantly higher than those observed in our real-world system.

\subsection{Robustness to Drop of Glimpses}
\label{app:drop_sensitivity}

\begin{figure}[!h]
    \centering
    \includegraphics[width=\textwidth]{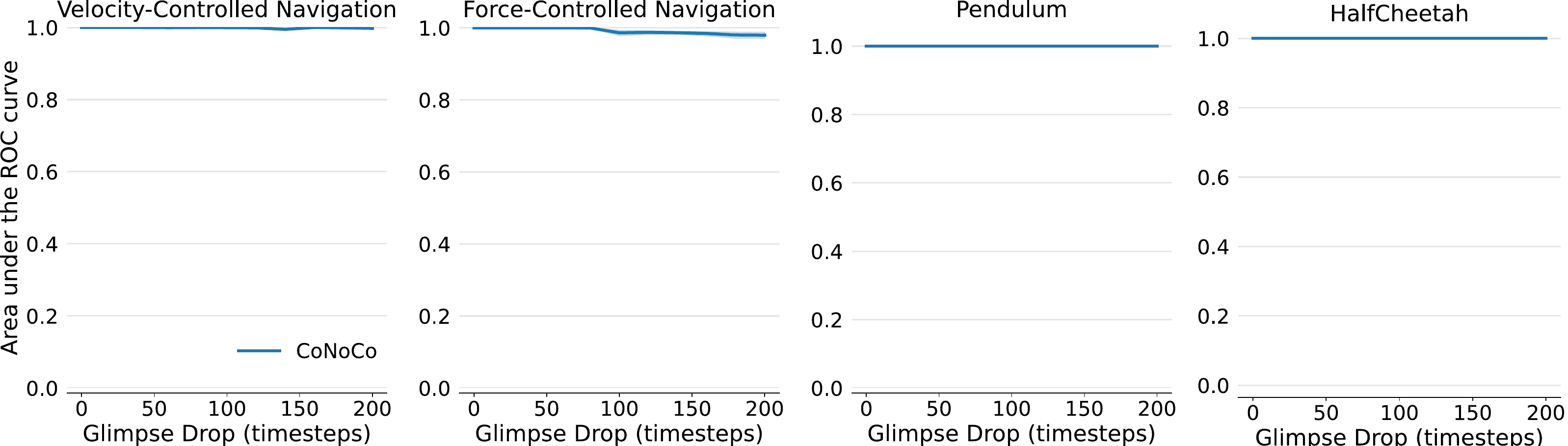}
    \caption{
    Relationship between \textit{glimpse drop} and the detectability of \name{}, reported as the ROC AUC over $100$ repetitions, using the Onboard Sensors glimpse modality. Shaded regions indicate quartiles.
    }
    \label{fig:drop_sensitivity}
\end{figure}

Another alteration that can impact detection in real-world settings is glimpse-drop, where a proportion of the glimpses is missed at random point during detection. 
We study the sensitivity of \name{} to such drop by removing an increasing number of randomly-sampled glimpses across tasks. 
The results in Figure~\ref{fig:drop_sensitivity} show that \name{} remains robust to glimpse-drop up to $200$ frames, which represent $20\%$ of frames in the Navigation and Pendulum tasks and $5\%$ in the HalfCheetah tasks.

\subsection{Robustness to Remote Glimpses Shift}
\label{app:projection_sensitivity}

\begin{figure}[!h]
    \centering
    \includegraphics[width=0.7\textwidth]{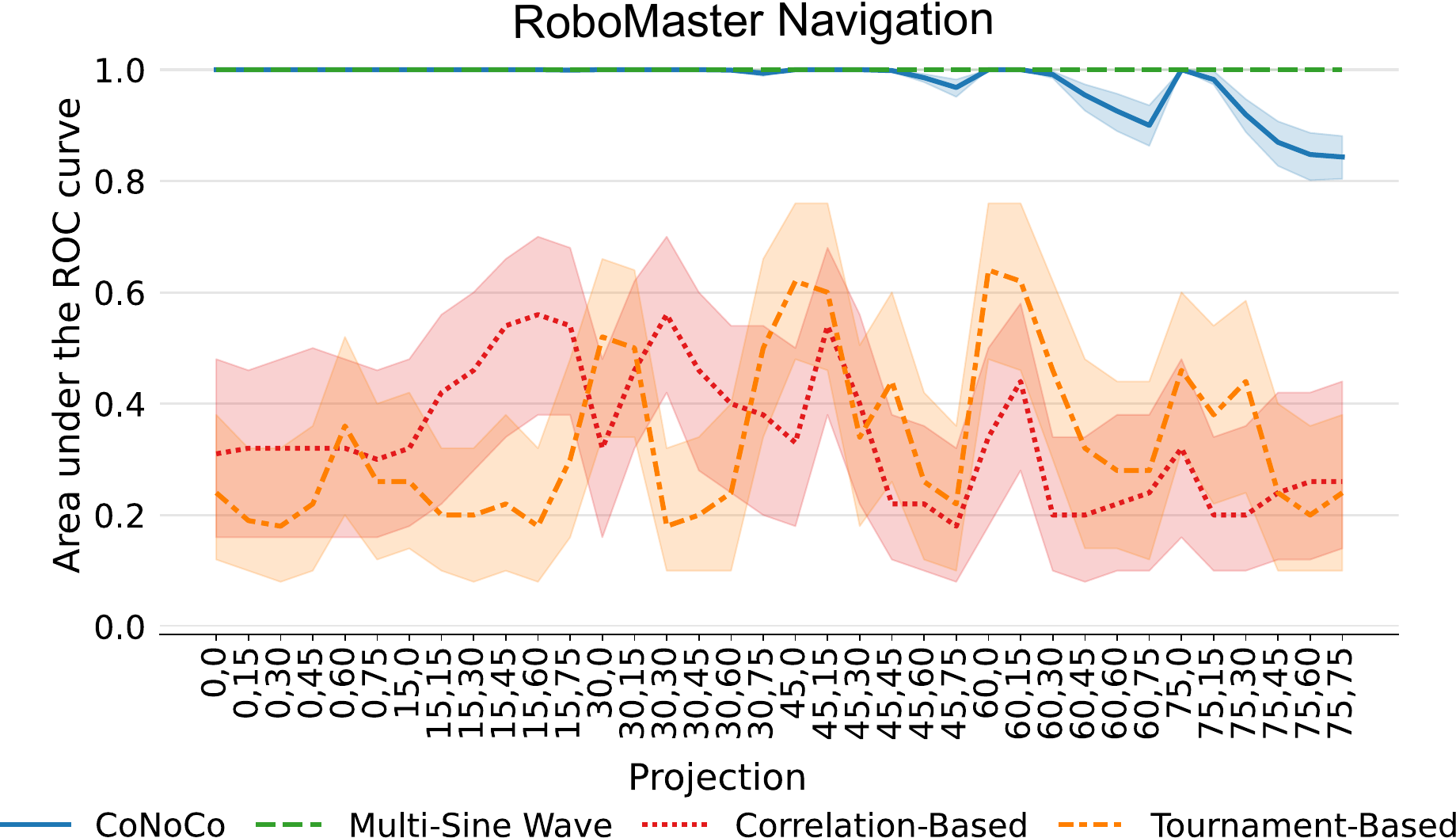}
    \caption{
    Relationship between \textit{Motion Capture Glimpses Shift Angle} and the detectability of \name{}. Angles are specified as tuples of the x and y axes rotations (in degrees). Detectability is reported as the ROC AUC over $100$ repetitions, using the Motion Capture Glimpses modality. Shaded regions indicate quartiles.
    }
    \label{fig:projection_sensitivity}
\end{figure}

When using remote auditing (C1), another potential challenge in real-world scenarios is that the camera or sensor may be slightly shifted relative to the side-view or top-down view adopted in our experimental setup. We study the robustness of \name{} to this type of alteration by projecting the glimpses onto planes that are slightly shifted with respect to the original one. In Figure~\ref{fig:projection_sensitivity}, we present this analysis for the real-world RoboMaster data and report how the ROC AUC evolves as the angle between the original and projected planes increases. We report the angles as tuples of two elements, containing the rotations along the x and y axes with respect to the original glimpse plane. Our results highlight that the detection performance of \name{} is only affected for very large angular deviations from the original plane (greater than $60^\circ$ along both x and y), indicating strong robustness to such shifts.

\section{Adversarial Robustness Analysis}
\label{app:adversarial}

We consider a threat model where the adversary is the Policy User (Figure~\ref{fig:pipeline}, Step 2), aiming to evade detection while preserving the utility of the stolen policy $\tilde{\pi}_{\theta}$. We assume the adversary does not possess the secret key $\mathcal{K}$.

We first consider robustness to additive noise. Then, because \textsc{CoNoCo} uses Colored Gaussian Noise (CGN), it alters the spectral distribution of the actions. We acknowledge that a sophisticated adversary analyzing long-term spectral averages might be able to localize the secret band $\mathcal{B}$. We analyze the strategies the adversary might employ if they identify the band. We show that learning the band is generally unhelpful. We finally discuss distillation / behaviour cloning as a possible vehicle for IP theft.

\subsection{Robustness to Additive Noise Attacks}
\label{subsec:additive_noise}

\paragraph{Additive Noise Attack (Randomized Smoothing).} We consider the effects of an \textit{Additive Noise Attack}. The adversary adds White Gaussian Noise (WGN) to the actions before execution:
\begin{equation}
    a'_k = \text{Clip}\left(a_k + \eta_{adv}\right),
\end{equation}
where $\eta_{adv} \sim \mathcal{N}(0, \sigma_{adv}^2 I)$, and the clipping ensures the actions remain within the environment's physical bounds. The standard deviation $\sigma_{adv}$ represents the adversary's strength or budget.

This attack directly impacts detectability by introducing an additional source of noise into the system. Referring to the analysis in Section 5, the addition of $\eta_{adv}$ increases the overall noise power $P_N(f)$, consequently reducing the Signal-to-Interference-plus-Noise Ratio (SINR) (Definition 5.1). According to Theorem 5.3, this directly lowers the expected magnitude of the coherency, making detection more difficult.

\begin{figure}[!h]
    \centering
    \centering
    \includegraphics[width=0.8\textwidth]{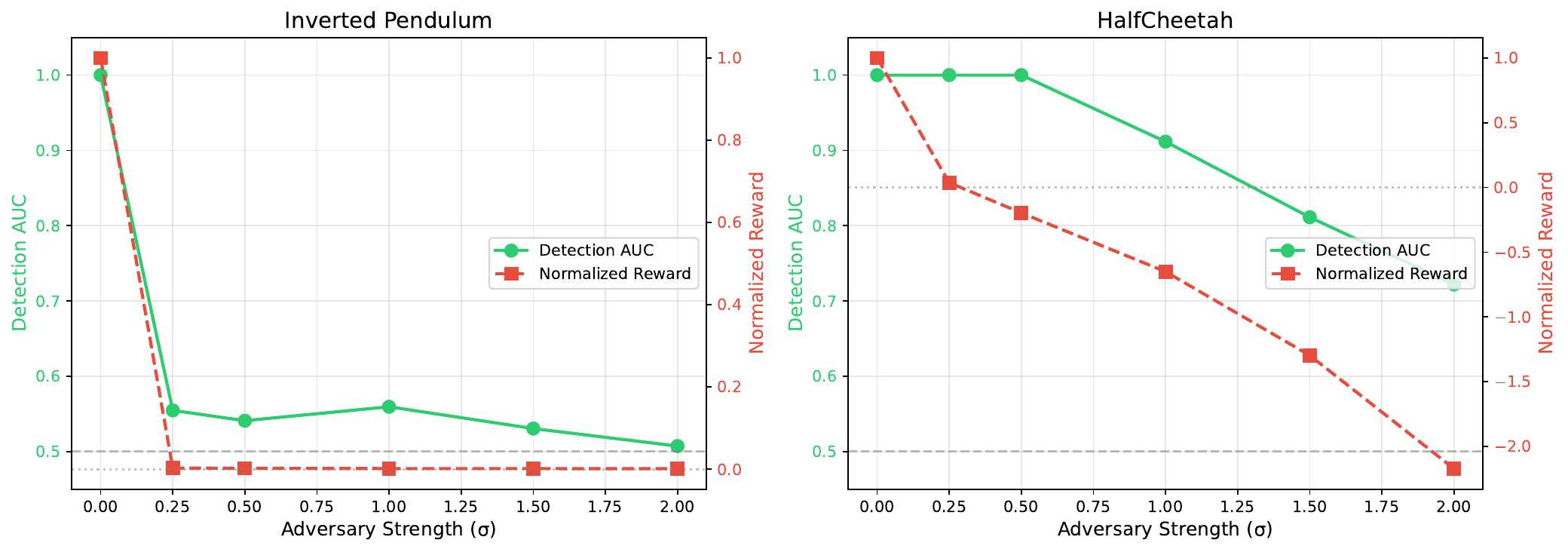}
    \label{fig:adversarial_results}
    \caption{Adversarial robustness results for additive noise attacks (results over 10 repeats). Each subfigure shows detection scores and \textit{normalized} policy reward as a function of adversarial noise strength $\sigma_{adv}$ ranging from $0$ to $2$. Normalized reward means the ratio of reward at each adversary strength to the baseline reward. The plots demonstrate \name's resilience, with detection persisting under high noise in HalfCheetah, while requiring severe reward degradation to evade detection in Inverted Pendulum.}
    \label{fig:adversarial_results}
\end{figure}

\paragraph{Evaluation and Results.} To evaluate \textsc{CoNoCo}'s robustness, we simulate this attack on the Inverted Pendulum and HalfCheetah tasks, varying the adversarial strength $\sigma_{adv}$ from $0$ to $2$. We analyze the trade-off between the reduction in the detection scores and the degradation in the policy's reward.
The results, shown in Figure~\ref{fig:adversarial_results}, demonstrate that \name is strongly robust to this type of adversarial attack. In the HalfCheetah task, both detection AUC and policy reward degrade gradually as $\sigma_{adv}$ increases, but the watermark remains consistently identifiable up to $\sigma_{adv} = 2$. This is particularly robust given that, at $\sigma_{adv} = 2$, the added adversarial noise overwhelms the policy's original actions, yet detection persists. In contrast, for the Inverted Pendulum task, even a modest $\sigma_{adv} = 0.25$ severely degrades \textit{both} detection and reward, indicating that the attack cannot evade watermarking without rendering the policy ineffective. Overall, these findings show that additive noise attacks perform poorly against \name, as evading detection requires noise levels that destroy the policy's value, demonstrating its robustness.


\subsection{Adversarial Filtering Attack (Band-stop)}
\label{subsec:filtering_attack}

If the adversary has successfully learned $\mathcal{B}$, a direct attack is to apply a band-stop (notch) filter $H_{stop}$ to the actions $a'_k = H_{stop}(\tilde{a}_k)$. While this successfully attenuates the watermark $W_k$, the filter is applied to the \textit{entire} action, meaning it also removes components of the original policy behavior $\mu_k$ within $\mathcal{B}$.

This acts as \textbf{self-jamming}. If the band $\mathcal{B}$ contains frequencies important for the robot's task performance, this filtering degrades the utility of the stolen policy.

\begin{figure}[!h]
    \centering
    \includegraphics[width=0.7\textwidth]{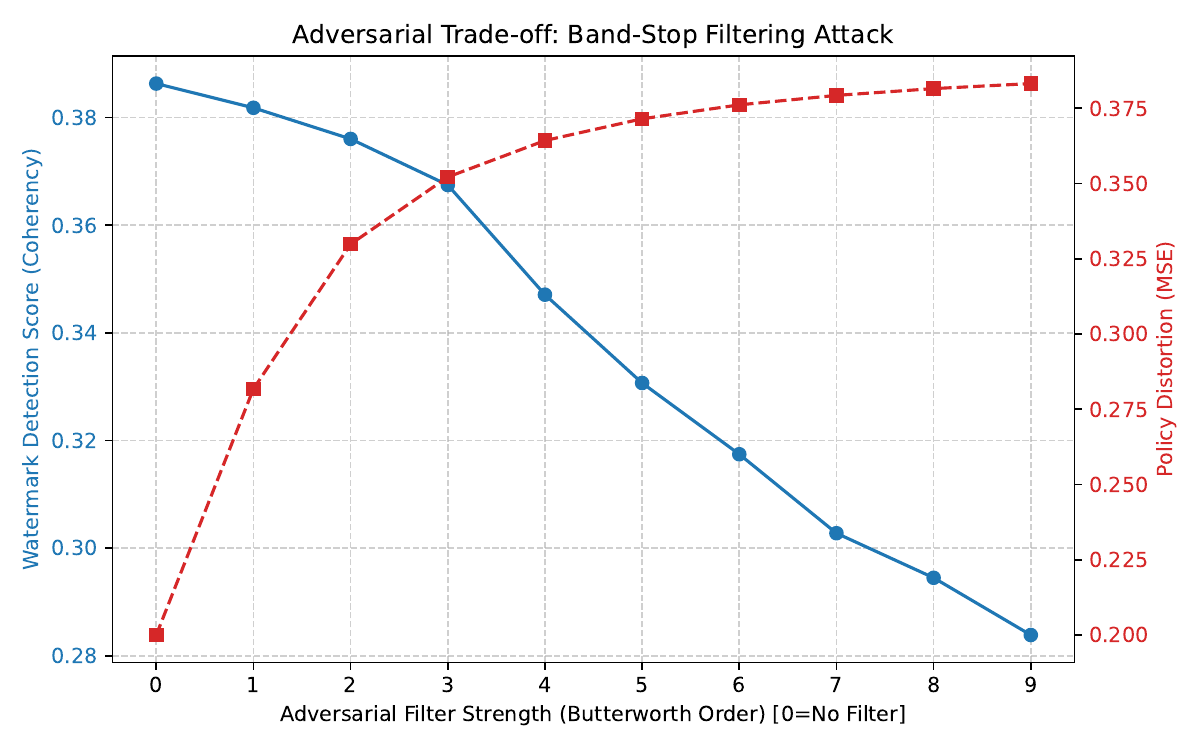}
    \caption{Adversarial Trade-off for Band-Stop Filtering Attack. As the adversary increases the filter strength (Order), the watermark detection score (Blue) decreases, but the distortion (MSE) to the original policy behavior (Red) increases dramatically (by 257\% in this simulation), demonstrating the degradation of policy utility.}
    \label{fig:filtering_tradeoff}
\end{figure}

We demonstrate this trade-off in a numerical experiment (Figure~\ref{fig:filtering_tradeoff}), where the policy behaviour partially intersects the secret band. Roughly speaking, if $x\%$ of the frequency content of a policy $\mu_k$ lies inside the secret band $\mathcal{B}$, and we apply a filter to completely dampen this band, the distortion (Mean Squared Error) between the intended policy behaviour $\mu_k$ and the executed action $a'_k$ will be $x\%$, harming performance.

To demonstrate this, we consider a synthetic policy $\mu_k$,  generated from colored Gaussian noise with $60\%$ of its signal intersecting the secret band $\mathcal{B}$, and $40\%$ of its signal intersecting another, arbitrary frequency band. As the filter strength increases, the watermark detection score drops, but the distortion (Mean Squared Error) between the intended behavior $\mu_k$ and the executed action $a'_k$ increases significantly. In our simulation, applying a strong filter increased policy distortion to $37.5\%$ relative to the baseline.  Since almost all policies will intersect with the secret band $\mathcal{B}$ to some extent, this demonstrates that band-stop adversarial attacks against \textsc{CoNoCo} come at a significant performance cost, defeating the point of such adversarial methods.

\subsection{RL-based Structured Jamming}
\label{subsec:structured_jamming}

One might consider an approach wherein some RL agent learns a structured jamming signal $J_k$ to interfere with the watermark. Optimal interference implies \textit{cancellation} ($J_k \approx -\Sigma_k W_k$). However, this is rigorously impossible without the secret key. Cancellation requires predicting the instantaneous value (phase) of the pseudo-random signal $W_k$. Since the adversary does not know the key (the seed for $W_k$), $W_k$ and the learned jamming signal $J_k$ are independent, zero-mean stochastic processes. Mathematically, the variance (power) of the sum is the sum of the variances: $Var(W+J) = Var(W) + Var(J)$. The power always increases.

\begin{figure}[!h]
    \centering
    \includegraphics[width=0.7\textwidth]{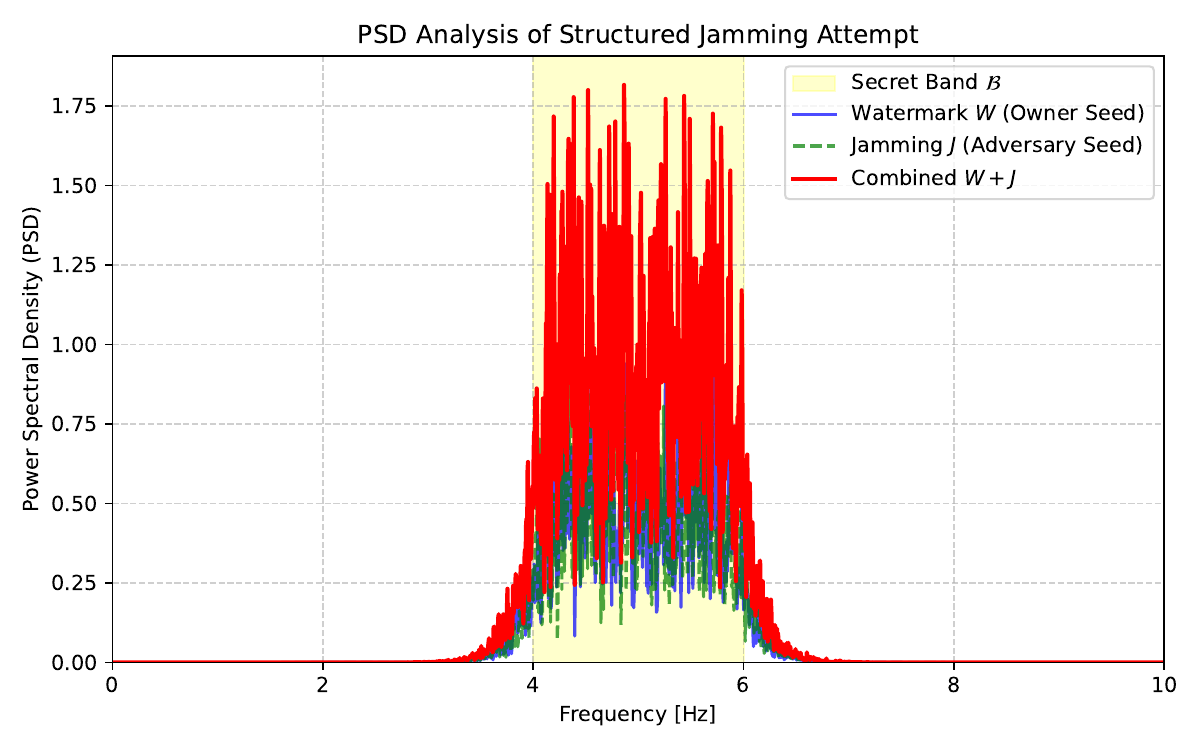}
    \caption{PSD Analysis of Structured Jamming Attempt. The Watermark (W) and Jamming signal (J) are generated with different seeds. The power of the combined signal (W+J) is the sum of the individual powers, confirming that cancellation is impossible without the secret key.}
    \label{fig:structured_jamming_psd}
\end{figure}

We verified this numerically and visualize it using Power Spectral Density (PSD) in Figure~\ref{fig:structured_jamming_psd}. The combined signal power (Red) is the sum of the individual powers (Blue and Green), confirming that the signals add rather than cancel. Therefore, the adversary cannot implement structured cancellation. Their best strategy reverts to the Additive Noise Attack analyzed in Section~\ref{subsec:additive_noise}.

\subsection{Policy Distillation}

Policy distillation might remove the watermark as a side-effect of the imperfect approximation. While potentially effective, we outline two possible challenges when deploying such approaches:

\begin{enumerate}

\item IP-sensitive policies, such as foundation models and autonomous driving policies, are often trained on vast amounts of diverse and proprietary real-world data (sometimes gathered from different robots and task types) and synthetic (simulator) data. It is not necessarily feasible or cost-effective for our adversary to distil the original policy purely by collecting data from deploying such policies on a single robot, a handful of robots, or even an API (which may include rate limits or incur significant usage costs at the scale required for distillation). 

\item Building on the above, since distillation is an imperfect approximation, it often leads to a performance drop, reducing the value of the stolen IP (e.g., \cite{Cho_2019_ICCV_efficacy_of_distillation}).

\end{enumerate}

We conclude that, while distillation may be an effective method against our watermark, it requires specific conditions to be an attractive adversarial method against \name.

\section{Results on Additional Environments} \label{app:environments}

\subsection{Results on Force-Controller VMAS Navigation with Obstacles}

We introduce an additional environment: Force-Controller VMAS Navigation with dynamically resampled obstacles. In this setting, the agent navigates among eight obstacles whose positions are randomly resampled each time the agent reaches the goal, resulting in multiple reconfigurations per episode. We choose this environment because its cluttered layout intrinsically constrains the policy’s exploration noise during certain phases of the task. Our aim is to evaluate whether \name{} remains effective under such limited-variance conditions.
The results, shown in Figure~\ref{fig:navigation_obstacles}, indicate that CoNoCo maintains strong detection performance, demonstrating its robustness even in low-variance environments.

\begin{figure}[h]
    \centering
    
    \includegraphics[width=\textwidth]{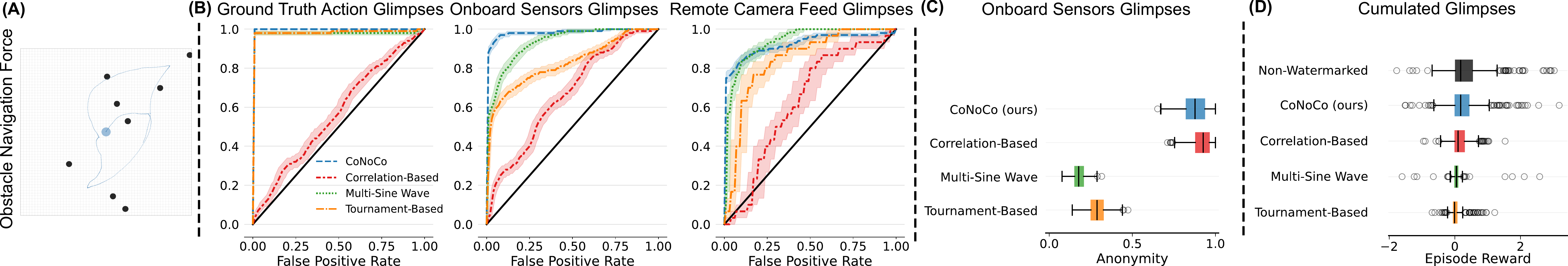}
    \caption{Results on the Velocity-Controlled VMAS Navigation with Obstacles task. A) Visualization of the environment. The blue dot represents the agent, the black dots denote the obstacles, and the line shows an example trajectory. (B) Detectability: ROC curve for $100$ replications of the watermarked and non-watermarked policy for each baseline, lines indicate median and dashed areas quartiles. (C) Anonymity: computed as the complement to $1$ of the ROC area under the curve for detection with a different owner seed, for \textit{Onboard Sensors} glimpses. (D) Reward Preservation: reward distribution of the watermarked and non-watermarked policies. }
    \label{fig:navigation_obstacles}
\end{figure}

\subsection{Results on Velocity-Controller VMAS Navigation}
\label{app:navigation_vel}

We provide in Figure~\ref{fig:navigation_vel} the results for the Velocity-Controlled VMAS Navigation task. The policy used here is also the one deployed on the RoboMaster in the main results. The results show that baselines that got low detectability on the real robot are getting better results on the simulated task. \name is highly successful in both the real and simulated task. 

\begin{figure}[h]
    \centering
    
    \includegraphics[width=\textwidth]{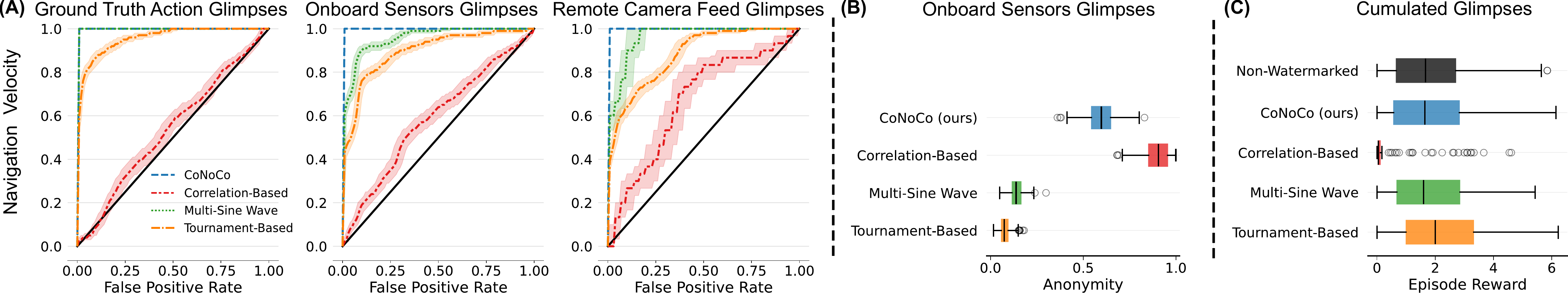}
    \caption{Results on the Velocity-Controlled VMAS Navigation task. T(A) Detectability: ROC curve for $100$ replications of the watermarked and non-watermarked policy for each baseline, lines indicate median and dashed areas quartiles. (B) Anonymity: computed as the complement to $1$ of the ROC area under the curve for detection with a different owner seed, for \textit{Onboard Sensors} glimpses. (C) Reward Preservation: reward distribution of the watermarked and non-watermarked policies. }
    \label{fig:navigation_vel}
\end{figure}

\section{Proof of Theorem \ref{thm:W1} (Marginal Distribution Preservation)}

\begin{proof}[Proof of Theorem \ref{thm:W1} (Marginal Distribution Preservation)]
Let $H$ be a stable LTI filter with impulse response $\{h_j\}$. The input is a sequence of independent and identically distributed (i.i.d.) White Gaussian Noise $X_k \sim \mathcal{N}(0, 1)$. The raw filtered output $W^{\textrm{raw}}_k$ is given by the convolution:
\begin{equation}
    W^{\textrm{raw}}_k = (H * X)_k = \sum_{j=-\infty}^{\infty} h_j X_{k-j}
\end{equation}

\textbf{1. Gaussianity:} Since the input variables $X_{k-j}$ are jointly Gaussian (due to independence), and $W^{\textrm{raw}}_k$ is a linear combination of these variables, $W^{\textrm{raw}}_k$ is also a Gaussian random variable.

\textbf{2. Mean:} We calculate the expected value using the linearity of expectation:
\begin{equation}
    E[W^{\textrm{raw}}_k] = E\left[\sum_{j} h_j X_{k-j}\right] = \sum_{j} h_j E[X_{k-j}]
\end{equation}
Since $E[X_k] = 0$ for all $k$, we have $E[W^{\textrm{raw}}_k] = 0$.

\textbf{3. Variance:} We calculate the variance. Since the mean is zero, $Var[W^{\textrm{raw}}_k] = E[(W^{\textrm{raw}}_k)^2]$.
\begin{align}
    Var[W^{\textrm{raw}}_k] &= E\left[\left(\sum_{j} h_j X_{k-j}\right) \left(\sum_{m} h_m X_{k-m}\right)\right] \\
    &= \sum_{j} \sum_{m} h_j h_m E[X_{k-j} X_{k-m}]
\end{align}
Since $X_k$ is WGN with unit variance, $E[X_n X_m] = \delta_{nm}$ (Kronecker delta). The expectation is non-zero only when $j=m$.
\begin{equation}
    Var[W^{\textrm{raw}}_k] = \sum_{j} h_j^2 = \sigma^2_{W^{\textrm{raw}}}
\end{equation}
This sum converges because the filter $H$ is stable.

\textbf{4. Normalization:} The final watermark sequence $W_k$ is normalized by the standard deviation:
\begin{equation}
    W_k = \frac{W^{\textrm{raw}}_k}{\sigma_{W^{\textrm{raw}}}}
\end{equation}
$W_k$ remains Gaussian with mean $E[W_k] = 0$. Its variance is:
\begin{equation}
    Var[W_k] = \frac{Var[W^{\textrm{raw}}_k]}{\sigma^2_{W^{\textrm{raw}}}} = 1
\end{equation}
Therefore, the marginal distribution of $W_k$ is $\mathcal{N}(0, 1)$.
\end{proof}

\section{Proof of Thereom \ref{thm:coherency_LTI} (Invariance of Coherency Magnitude under LTI Filtering)}

\begin{proof}
We have $S_{YY}(f) = |H_{sys}(f)|^2 S_{XX}(f)$ and $S_{XY}(f) = H_{sys}(f)^{*} S_{XX}(f)$. This implies 
\(
    |C_{XY}(f)| = \frac{|S_{XY}(f)|}{\sqrt{S_{XX}(f)S_{YY}(f)}} = \frac{|H_{sys}(f)^{*} S_{XX}(f)|}{\sqrt{S_{XX}(f) \cdot |H_{sys}(f)|^2 S_{XX}(f)}} = 1
\).
\end{proof}

\section{Proof of Theorem \ref{thm:coherency_noise_detailed} (Coherency and SINR)}

\begin{proof}
We analyze the system under the LTI assumptions stated in the theorem. The input to the dynamics $H_{sys}$ is the action $\tilde{a} = \mu + \Sigma W$. The observed glimpse $G$ is the output of the dynamics plus sensor noise $\eta$. We aim to calculate the magnitude squared coherency between the watermark $W$ and the glimpse $G$, $|C_{WG}(f)|^2$.

Due to the linearity of the system, the glimpse $G$ (in the time domain) is a superposition of the responses:
\begin{equation}
    G(t) = (\Sigma H_{sys} * W)(t) + (H_{sys} * \mu)(t) + \eta(t).
\end{equation}
We decompose $G(t)$ into two components: $G(t) = S(t) + N(t)$.

The signal component $S(t)$ is the part derived from the watermark $W$:
\begin{equation}
    S(t) = (\Sigma H_{sys} * W)(t).
\end{equation}
$S(t)$ is the output of an LTI system (defined by the combined response $\Sigma H_{sys}$) with input $W(t)$.

The noise/interference component $N(t)$ includes the policy interference and sensor noise:
\begin{equation}
    N(t) = (H_{sys} * \mu)(t) + \eta(t).
\end{equation}
By assumption, $W, \mu, \eta$ are mutually independent. Therefore, the input $W(t)$ is independent of the noise component $N(t)$. Furthermore, the signal $S(t)$ (derived only from $W$) is independent of $N(t)$.

We define the signal power $P_S(f)$ and noise power $P_N(f)$ in the glimpse $G$ as the PSDs of $S(t)$ and $N(t)$ respectively (Definition \ref{def:sinr}):
\begin{align}
    P_S(f) &= S_{SS}(f) \\
    P_N(f) &= S_{NN}(f)
\end{align}
Since $G=S+N$ and $S$ and $N$ are independent (and thus uncorrelated, assuming standard zero-mean processes for spectral analysis), the PSD of the glimpse $G$ is the sum of the component PSDs:
\begin{equation}
    S_{GG}(f) = S_{SS}(f) + S_{NN}(f) = P_S(f) + P_N(f).
\end{equation}

We analyze the CSD between the input $W$ and the output $G$. Using the distributive property of the CSD (derived from the linearity of expectation):
\begin{equation}
    S_{WG}(f) = S_{W(S+N)}(f) = S_{WS}(f) + S_{WN}(f).
\end{equation}
Since $W$ and $N$ are independent (and zero-mean), their CSD is zero ($S_{WN}(f) = 0$). Thus:
\begin{equation}
    S_{WG}(f) = S_{WS}(f).
\end{equation}

The magnitude squared coherency is defined as:
\begin{equation}
    |C_{WG}(f)|^2 = \frac{|S_{WG}(f)|^2}{S_{WW}(f) S_{GG}(f)}.
\end{equation}
We substitute the results from steps 2 and 3:
\begin{equation}
    |C_{WG}(f)|^2 = \frac{|S_{WS}(f)|^2}{S_{WW}(f) (P_S(f) + P_N(f))}.
    \label{eq:CWG_intermediate}
\end{equation}

We now relate the numerator $|S_{WS}(f)|^2$ to $P_S(f)$. Recall that $S$ is the output of an LTI system with input $W$, without added noise. By Theorem \ref{thm:coherency_LTI} (Invariance of Coherency Magnitude under LTI Filtering), the magnitude squared coherency between $W$ and $S$ must be 1:
\begin{equation}
    |C_{WS}(f)|^2 = \frac{|S_{WS}(f)|^2}{S_{WW}(f) S_{SS}(f)} = 1.
\end{equation}
Therefore, we can express the numerator in Eq. \ref{eq:CWG_intermediate} as:
\begin{equation}
    |S_{WS}(f)|^2 = S_{WW}(f) S_{SS}(f) = S_{WW}(f) P_S(f).
\end{equation}

Substituting this back into Eq. \ref{eq:CWG_intermediate}:
\begin{equation}
    |C_{WG}(f)|^2 = \frac{S_{WW}(f) P_S(f)}{S_{WW}(f) (P_S(f) + P_N(f))}.
\end{equation}
Assuming the watermark has power ($S_{WW}(f) > 0$), we simplify:
\begin{equation}
    |C_{WG}(f)|^2 = \frac{P_S(f)}{P_S(f) + P_N(f)}.
\end{equation}

The SINR is defined as $\text{SINR}(f) = \frac{P_S(f)}{P_N(f)}$. Dividing the numerator and denominator of the coherency expression by $P_N(f)$:
\begin{equation}
    |C_{WG}(f)|^2 = \frac{P_S(f)/P_N(f)}{(P_S(f)/P_N(f)) + 1} = \frac{\text{SINR}(f)}{\text{SINR}(f) + 1}.
\end{equation}
\end{proof}

\end{document}